\newcommand{\revise}[1]{#1}
\newcommand{\R}{\mathbb{R}}
\newcommand{\N}{\mathbb{N}}
\newcommand{\Q}{\mathcal{Q}}
\newcommand{\Prob}{\mathcal{P}}
\newcommand{\A}{\operatorname{A}}
\renewcommand{\d}{\, \mathrm{d}}
\newcommand{\Sph}{\mathbb{S}}
\newcommand{\SW}{\operatorname{SW}}
\newcommand{\W}{\operatorname{W}}
\newcommand{\GW}{\operatorname{GW}}
\newcommand{\FGW}{\operatorname{FGW}}
\newcommand{\D}{\mathcal{D}}
\newcommand{\argmin}{\operatorname*{arg\,min}}
\newtheorem{theorem}{Theorem}[section]
\newtheorem{proposition}[theorem]{Proposition}
\theoremstyle{definition}
\newcommand{\XX}{\mathbb{X}}
\newcommand{\XXS}{\mathbb{X}^{\mathcal{S}}}
\newcommand{\YY}{\mathbb{Y}}
\newcommand{\YYS}{\mathbb{Y}^{\mathcal{S}}}
\newcommand{\T}{\mathcal{T}}
\newcommand{\FLB}{\operatorname{FLB}}
\newcommand{\SLB}{\operatorname{SLB}}
\newcommand{\TLB}{\operatorname{TLB}}
\newcommand{\STLB}{\operatorname{STLB}}
\newcommand{\FTLB}{\operatorname{FTLB}}
\newcommand{\SFTLB}{\operatorname{SFTLB}}
\newcommand{\LD}{\operatorname{LD}}
\newcommand{\vq}{\mathbf q}
\title{A Novel Sliced Fused Gromov-Wasserstein Distance}
\author{
    Moritz Piening\equalcontrib,
    Robert Beinert\equalcontrib
}
\begin{document}


\maketitle

\begin{abstract}
    The Gromov--Wasserstein (GW) distance
    and its fused extension (FGW) 
    are powerful tools for comparing heterogeneous data.
    Their computation is, however, challenging
    since both distances are based on 
    non-convex, quadratic optimal transport (OT) problems.
    Leveraging 1D OT,
    a sliced version of GW has been proposed
    to lower the computational burden.
    Unfortunately,
    this sliced version is restricted to Euclidean geometry
    and loses invariance to isometries,
    strongly limiting its application in practice.
    To overcome these issues,
    we propose a novel slicing technique for GW as well as for FGW
    that is based on an appropriate lower bound, 
    hierarchical OT,
    and suitable quadrature rules for the underlying 1D OT problems.
    Our novel sliced FGW significantly reduces the numerical effort
    while remaining invariant to isometric transformations
    and allowing the comparison of arbitrary geometries.
    We show that our new distance actually defines 
    a pseudo-metric for structured spaces that bounds FGW from below
    and study its interpolation properties between sliced Wasserstein and GW.
    Since we avoid the underlying quadratic program,
    our sliced distance is numerically more robust and reliable
    than the original GW and FGW distance;
    especially in the context of shape retrieval 
    and graph isomorphism testing.
\end{abstract}

\section{Introduction}
The Gromov--Wasserstein (GW) distance
\cite{memoli2011gromov}
and the Fused Gromov--Wasserstein (FGW) distance
\cite{vayer2020fused}
extend the classical optimal transport (OT) 
framework \cite{villani2003topics}
to the comparison of heterogenous data by modelling them as metric measure spaces
(mm-spaces).
While the resulting distances are powerful tools,
their computation is costly
and does not admit an exact solution.
To accelerate the underlying
non-convex, quadratic program,
several algorithms have been proposed
using regularization techniques \cite{peyre2016gromov, PeyreCuturi2019},
employing linearized distances \cite{beier2022linear, nguyen2023linearfused}, quantizating the problem \cite{chowdhury2021quantized}  or
using constrained optimization \cite{scetbon2022lowrankgw, scetbon2023unbalanced}. 
As a consequence,
established solvers mainly compute upper bounds
and numerical approximations of the actual GW and FGW distance.
An alternative acceleration framework builds on 1D OT and random projections to define the so-called sliced GW distance \cite{vayer2019sgw,beinert2023assignment}. 
However, sliced GW is restricted to Euclidean geometry and lacks isometric invariance, limiting its practical use.
As a remedy, we design a novel sliced GW distance fixing these issues and extend it to FGW. In particular, our new distance allows for an analytical computation outlined in Figure~\ref{fig:idea_viz}.

For our design, 
we turn towards the complementary literature on lower GW bounds 
\cite{memoli2011gromov, vayer2019sgw,redko2020coot, jin2022orthogonal}.
Since many of these rely on optimization heuristics,  
such as bi-convex relaxation \cite{redko2020coot} 
or sorting \cite{vayer2019sgw, beinert2023assignment},
and we are interested in exact computation,
we turn our attention to the bounds in \cite{memoli2011gromov}.
In particular, we are motivated by recent theoretical work proving their metricity on certain spaces \cite{memoli2022distance}
and the practical effectiveness of such bounds in graph isomorphism testing \cite{weitkamp2022gromov, weitkamp2024distribution}, where established GW solvers fail in practice. 
The aim behind graph isomorphism testing \cite{shervashidze2011wlkernel, grohe2020graph}
is to verify or disprove
whether two or more graphs are topologically equivalent or not.
\begin{figure*}
    \centering
    \includegraphics[width=\linewidth]{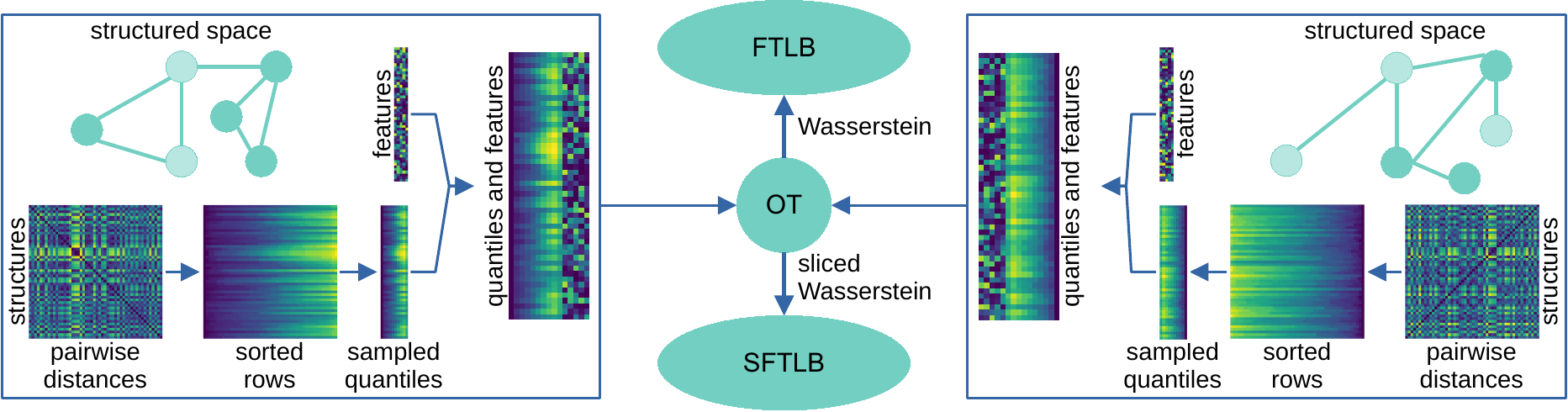}
    \caption{
    Visualization of our novel sliced fused Grovov--Wasserstein distance called SFTLB:
    Starting from two structured spaces like labeled graphs equipped with a graph distance,
    we sort the corresponding distance matrices row-wise.
    Subsequently, 
    we determine samples of the quantile functions of the local distance distribution for each node.
    Afterward,
    we concatenate the sampled quantiles with the original features
    and interpret the outcome as an empirical Euclidean measure.
    Finally, we compute the Wasserstein or sliced Wasserstein distance between these measures.}
    \label{fig:idea_viz}
\end{figure*}

In this paper, 
we focus on the strongest of M{é}moli's GW bounds%
---the third lower bound (TLB)---%
and start by extending TLB to the fused case. 
TLB itself is defined as a hierarchical OT problem, 
i.e., 
an OT problem whose costs are given by another OT problem.
Indeed, 
formulations of such kinds recently have attracted attention from the OT community
\cite{dukler2019wasserstein, alvarez2020geometric}. 
Despite interesting theoretical properties 
\cite{delon2020wasserstein, bonet2025flowing},
these problems are usually computationally challenging.
Therefore, recent work has focused on extending
sliced Wasserstein (SW) techniques to the hierarchical case for acceleration 
\cite{piening2025smw, nguyen2025sotdd}. 
Keeping this technique in mind, we tackle the computational burden of our new FGW lower bound by
rewriting it as an Euclidean Wasserstein distance.
To circumvent the limitation of slicing in high dimensions 
\cite{tanguy2025properties},
we propose to employ numerical quadrature schemes.
All in all,
we deduce an equivalent, but more efficient, lower bound.
Apart from extending the literature on sliced hierarchical OT,
this also overcomes previous limitations of sliced GW, 
where sorting-based solvers might fail \cite{vayer2019sgw, beinert2023assignment}.
We summarize our contributions as follows:
\begin{itemize}
    \item[1.] {Towards the goal of an exact solver, we generalize Mémoli's GW bound to FGW and recall its connection to a classical Wasserstein problem.}
    \item[2.] {To overcome the limitations of sliced GW, we employ this connection and numerical quadrature schemes to define an effective novel sliced FGW distance between arbitrary structured spaces beyond Euclidean geometry.}
    \item[3.] {We prove that our new distance is a pseudo-metric interpolating between TLB and SW. Additionally, our proof illustrates the dimensional dependence of SW.}
    \item[4.] {Lastly, we provide numerical experiments for our sliced FGW, showcasing its efficiency as well as its applicability for shape comparison and graph isomorphism testing.}
\end{itemize}
\revise{Note that this is an extended version of a AAA'26 conference paper \cite{piening2025slicedgw_aaai}.}

\section{Optimal Transport-Based Distances}

At the heart of OT is the calculation of
optimal transfers from one measure to another.
For this,
let $\Prob(X)$ denote the space of Borel probability measures
on a compact Polish space $X$.
A measure $\xi \in\Prob(X)$ 
is translated to 
another compact Polish space $Y$
via a mapping $T\colon X \to Y$
by the \emph{push-forward} 
$T_\sharp \, \xi \coloneqq \xi \circ T^{{-}1}$.
Based on the canonical projections 
$\pi_\bullet$ onto a component,
e.g., $\pi_{X} \colon X \times Y \to X, (x, y) \mapsto x$,
the set of \emph{transport plans}
between $\xi \in \Prob(X)$, $\upsilon \in \Prob(Y)$ 
is defined as
\begin{equation}
    \Gamma(\xi, \upsilon) 
    \coloneqq
    \bigl\{ 
    \gamma \in\Prob(X \times Y) 
    \bigm\vert
    \pi_{X,\sharp} \, \gamma = \xi,
    \pi_{Y,\sharp} \, \gamma = \upsilon
    \bigr\}.
\end{equation}
In general,
OT seeks to find an optimal transport plan
minimizing a given loss function.
For certain losses,
this procedure yields a metric 
between probability measures.
    
\subsection{Wasserstein Distance}

For compact $Z \subset \R^d$
equipped with the Euclidean norm $\lVert \cdot \rVert$,
$\zeta, \zeta' \in \Prob(Z)$,
and $p \in [1, \infty)$,
the \emph{$p$-Wasserstein distance} is introduced as
\begin{equation}
    \label{eq:wasserstein}
    \W^p_p(\zeta, \zeta') 
    \coloneqq
    \smashoperator[l]{
    \inf_{\gamma \in\Gamma(\zeta, \zeta')}}
    \underbracket{\int_{Z \times Z} 
    \lVert z - z' \rVert^p 
    \, \d \gamma(z, z')}_{\eqqcolon \T_p(\gamma)}
\end{equation}
and defines a metric\footnote{\revise{Symmetric, positive definite function with triangle inequality.}} on $\Prob(Z)$. 
Its computation requires 
expensive optimization procedures
since closed-form solutions are rarely available.

One exception is the special case $Z \subset \R$,
where the optimal plan can be expressed
using the \emph{quantile function}:
\begin{equation}
    q_\zeta \colon (0,1) \to \R, \,
    s \mapsto 
    \inf \, \bigl\{ z \in \R 
    \bigm\vert
    \zeta((-\infty, z]) > s 
    \bigr\}.
\end{equation}
Then, the $p$-Wasserstein distance admits the closed-form
\begin{equation}
    \label{eq:1d_continuous_quantile_formula}
    \W_p^p(\zeta, \zeta') = \int_0^1 \big|q_\zeta(s) - q_{\zeta'}(s)\big|^p \, \d s;
\end{equation}
see \cite{villani2003topics}.
For discrete measures with with $n$ and $m$ support points,
the integral in \eqref{eq:1d_continuous_quantile_formula}
reduces to a sum with $n +m -1$ summands;
see \cite[Prop.~3.5]{PeyreCuturi2019} 
and \cite[Thm.~2.18]{villani2003topics}.

\subsection{Sliced Wasserstein Distance}
To leverage the computational benefits 
of the 1D Wasserstein distance,
we consider the \emph{slicing operator}
\begin{equation}{}
    \label{eq:projection}
    \pi_\theta 
    \colon
    \R^d \to \R
    ,\;
    z \mapsto \theta \boldsymbol{\cdot} z,
    \quad
    \theta \in\Sph^{d{-}1},
\end{equation}\normalsize
where 
$\Sph^{d-1} \coloneqq \{\theta \in \R^d \mid \lVert \theta \rVert = 1\}$
denotes the sphere with surface area $\A(\Sph^{d-1})$
and $\boldsymbol{\cdot}$ the Euclidean inner product.
For compact $Z \subset \R^d$, 
$\zeta,\zeta' \in \Prob(Z)$, and $p \in [1, \infty)$,
the \emph{sliced $p$-Wasserstein \emph{(SW)} distance} 
is introduced as
\begin{equation}
    \label{eq:sliced_wasserstein}
    \SW^p_p(\zeta, \zeta') 
    \coloneqq
    \frac{1}{\A(\Sph^{d{-}1})}
    \int_{\Sph^{d{-}1}}
    \W^p_p(\pi_{\theta,\sharp} \, \zeta, \pi_{\theta,\sharp} \, \zeta') 
    \d\theta
\end{equation}
with respect to the surface measure on $\Sph^{d-1}$.
The interest in this sliced distance is justified by 
the close relation to the classical Wasserstein distances.
Namely,
for compact $Z~{\subset}~\R^d$,
both distances display a form of metric equivalence,
i.e.,
there exists $C_Z>0$ such that
\begin{equation}{}
    \label{eq:sw_equivalnce}
    \SW^p_p (\zeta, \zeta')
    \le 
    \W^p_p (\zeta, \zeta')
    \le
    C_Z \SW^{{\frac{1}{(d{+}1)}}}_p(\zeta, \zeta')
\end{equation}\normalsize
for any $\zeta, \zeta' \in\Prob(Z)$;
see \cite{bonnotte2013unidimensional, bonneel2015slicedbarycenters, KPR16}.
Moreover, the spherical integral can be efficiently 
approximated via random projections
using Monte Carlo or Quasi-Monte Carlo methods 
for low-dimensional data
\cite{nguyen2024quasi, hertrich2025qmc_slicing}
as well as Gaussian approximations 
for high-dimensional data \cite{nadjahi2021fast}.
Note that 
there exist various extensions
to non-Euclidean spaces \cite{kolouri2019generalized, quellmalz2023sliced_optimal_transport, bonet2025sliced_hadamard}, invariant data \cite{beckmann2025max,beckmann2025normalized}, general probability divergences \cite{nadjahi2020statistical, hertrich2024generative}
and kernels \cite{rux2025slicing}.

\subsection{Gromov--Wasserstein Distance}
To compare measures on distinct metric spaces, 
the GW distance extends classical OT by matching pairwise distances.
For this,
let $\mathbb{X} \coloneqq (X,g,\xi)$ 
be a \emph{metric measure (mm) space} 
consisting of a compact metric space $(X,g)$
and $\xi \in \Prob(X)$.
For mm-spaces $\mathbb{X} \coloneqq (X,g,\xi)$ 
and $\mathbb{Y} = (Y,h,\upsilon)$,
the \emph{Gromov--Wasserstein \emph{(GW)} distance} is then defined as
\begin{equation}
    \begin{aligned}
        \GW^p_p(\XX,\YY)
        &\coloneqq
        \smashoperator[l]{
        \inf_{\gamma\in\Gamma(\xi,\upsilon)}}
        \iint_{(X\times Y)^2} 
        \bigl|g(x,x') - h(y,y')\bigr|^p
        \\[-9pt]
        &\hphantom{\coloneqq\smashoperator[l]{
        \inf_{\gamma\in\Gamma(\xi,\upsilon)}}}
        \underbracket{
            \hspace{60pt}\times
         \,\d\gamma(x',y')
        \,\d\gamma(x,y)}
        _{\eqqcolon \D_p(\gamma)}.
    \end{aligned}
    \label{eq:gw}
\end{equation}
Here, 
each transport plan $\pi$ induces 
an overall distortion $\D_p$ between $g$ and $h$.
This expression defines a metric on the equivalence classes of isomorphic mm-spaces;
see \cite{memoli2011gromov}. 
Since \eqref{eq:gw} is a non-convex, quadratic program,
its minimization typically relies on 
entropic regularization
and costly block-coordinate descent 
\cite{PeyreCuturi2019}. 
As a consequence, 
a set of hierarchical lower bounds has been proposed in
\cite{memoli2011gromov,memoli2022distance}.

\paragraph{First Lower Bound}
Define the pointwise \emph{$p$-eccentricity}
$s_{\XX,p} \colon X \to \R$
of an mm-space $\XX \coloneqq (X,g,\xi)$ as
\begin{equation}
    s_{\XX,p}^p(x)
    \coloneqq \int_X g^p(x,x')\,\d\xi(x').
\end{equation}
Then, the first lower GW bound between $\XX$ and $\YY$ is
\begin{equation}
\FLB^p_p(\XX,\YY)
\coloneqq \,
\W_p^p(s_{\XX,p,\sharp} \, \xi, s_{\YY,p,\sharp} \, \upsilon),
\end{equation}
which compares only the first‐order moments of pairwise distances.
This is an efficiently solvable 1D OT problem.

\paragraph{Second Lower Bound}
By matching the entire distribution of distances,
the second lower GW bound reads as
\begin{equation}
    \SLB^p_p(\XX,\YY)
    \coloneqq 
    \W_p^p\bigl(g_\sharp(\xi\otimes\xi), h_\sharp(\upsilon\otimes\upsilon)\bigr),
\end{equation}
where $\otimes$ denotes the product measure.
This lifts FLB by considering all pairwise distances at once. 
Again, 
characterization as 1D transport problem enables fast computation.

\paragraph{Third Lower Bound}
For each $(x,y) \in X \times Y$,
the local distance distributions
between $\XX$ and $\YY$ is defined via
\begin{align}
    \LD^p_{p}(x,y)
    &\coloneqq 
    \smashoperator[l]{
    \inf_{\gamma \in\Gamma(\xi,\upsilon)}}
    \int_{X \times Y}
    \bigl\lvert g(x, x') - h(y, y') \bigr\rvert^p
    \,\d \gamma(x',y')
    \\[-10pt]
    &= 
    \W^p_p\bigl( g(x,\cdot)_\sharp \, \xi, h(y,\cdot)_\sharp \,\upsilon\bigr).
    \label{eq:local_distance_distribution_wasserstein}
\end{align}
Decoupling the minimization of the inner and outer integral in \eqref{eq:gw},
we arrive at the third lower GW bound:
\begin{equation}
    \label{eq:TLB}
    \TLB^p_p(\XX,\YY)
    \coloneqq 
    \smashoperator[l]{
    \inf_{\gamma \in\Gamma(\xi,\upsilon)}}
    \underbracket{
    \int_{X\times Y}
    \LD^p_{p}(x,y)
    \,\d\gamma (x,y)}
    _{\eqqcolon \tilde\D_p(\gamma)}.
\end{equation}
Note that 
the inner plans
in \eqref{eq:local_distance_distribution_wasserstein}
do not match 
the outer plan 
in \eqref{eq:TLB}.
Similar to the other two cases, 
the estimation of LD
benefits from accelerated 1D solutions.
In contrast, 
TLB still requires a costly OT minimization.

\smallskip

Comparing all three lower bounds,
we obtain the hierarchy
\cite{memoli2011gromov,chowdhury2019gromov}:
\begin{equation}
    \FLB_p(\XX{,}\YY)  
    \le
    \SLB_p(\XX{,}\YY) 
    \le
    \TLB_p(\XX{,}\YY) 
    \le
    \GW_p(\XX{,}\YY).
\end{equation}
Moreover,
TLB is positive definite
and thus defines a metric
for certain classes of mm-spaces, 
whereas FLB and SLB fail 
to distinguish isomorphic mm-spaces 
\cite{memoli2022distance}.
To avoid the outer OT problem in \eqref{eq:TLB},
\cite{sato2020fast} rely on the maximum mean discrepancy (MMD)
with respect to the 1D Wasserstein distance as kernel,
which leads to an efficient computation scheme.
In this paper,
we follow a different approach by
accelerating the computation of the outer OT problem
using an efficient slicing,
which actually leads to a metric equivalence with TLB.

\subsection{Fused Gromov--Wasserstein Distance}
The GW distance allows
for a geometrically meaningful comparison 
between unlabeled, undirected graphs.
However,
due to the importance of node features in practical applications, 
practitioners often resort to a combination of Wasserstein and GW distances. 
For this,
the mm-spaces are extended by
a common compact feature space $Z \subset \R^d$.
More precisely,
a \emph{structured space}
$\XXS \coloneqq (X \times Z, g, \xi)$
consists of
a compact metric space $(X,g)$
and $\xi \in \Prob(X \times Z)$.
Structured spaces are
also known as labelled spaces or structured objects
\cite{vayer2020fused}. 
The space of all structured spaces 
is denoted
as $\mathcal{S}(Z)$.
Having the generalization of TLB 
and its slicing in mind,
we study the following distance on $\mathcal S(Z)$,
which is a specific variant of \cite[Def.~8]{vayer2020fused}:
for $\XXS \coloneqq (X \times Z, g, \xi)$,
and $\YYS \coloneqq (Y \times Z', h, \upsilon)$
with $Z = Z'$
as well as
$\alpha \in [0,1]$
and $p \in [1,\infty)$,
the \emph{fused GW \emph{(FGW)} distance} reads as
\begin{align}
    \FGW_{\alpha,p}^p (\XXS, \YYS)
    &\coloneqq
    \smashoperator[l]{
    \inf_{\gamma\in\Gamma(\xi,\upsilon)}}
    (1-\alpha)
    \,\D_{p}(\pi_{X\times Y, \sharp} \, \gamma)
    \\[-7pt]
    &\hspace{51pt}
    + \alpha \, \T_p(\pi_{Z \times Z', \sharp} \,\gamma),
    \label{eq:fgw}
\end{align}
where $\D_{p}$ denotes 
the distortion in \eqref{eq:gw},
$\T_p$ refers to the transport in \eqref{eq:wasserstein},
and $\alpha$ balances structure versus features. 
In the case $\alpha = 0$,
we recover the original GW distance of the structure part;
for $\alpha = 1$, 
we obtain the original Wasserstein distance of the label part.
Notice that,
for $\alpha \in (0,1)$
and $p \in [1, \infty)$,
FGW only defines a semi-metric\footnote{\revise{Metric without triangle inequality.}}
on the space of isomorphic structure spaces,
\revise{fulfilling the relaxed triangle inequality}
\begin{align}
    &\FGW_{\alpha,p}^p(\YYS_1, \YYS_2)
    \\
    &\le
    2^{p-1} \bigl(
    \FGW_{\alpha,p}^p(\YYS_1, \XXS) 
    +
    \FGW_{\alpha,p}^p(\XXS,\YYS_2)
    \bigr);
    \label{eq:rel-tri}
\end{align}
see \cite[Prop.~4]{vayer2020fused}.

\section{Accelerating the Calculation of FGW}

\subsection{A Lower Bound Using TLB}

In principle,
any of the lower bounds for the GW distance
may be directly extended to the FGW distance.
Due to the metric properties of TLB
and its tight affinity to the original GW distance,
we solely focus on this bound.
The basic idea is to replace the GW distortion in FGW by \revise{TLB}.
In this manner,
for $\XXS \coloneqq (X \times Z, g, \xi)$, 
$\YYS \coloneqq (Y \times Z', h , \upsilon)$
with compact $Z = Z' \subset \R^d$,
$\alpha \in [0,1]$, and $p \in [1, \infty)$,
we introduce the \emph{Fused Third Lower Bound} (FTLB) via 
\begin{align}
    \FTLB^p_{\alpha, p}(\XXS,\YYS)
    &\coloneqq 
    \smashoperator[l]{
    \inf_{\gamma\in\Gamma(\xi,\upsilon)}}
    (1-\alpha) \,
    \tilde\D_{p}(\pi_{X \times Y, \sharp} \, \gamma)
    \\[-7pt]
    &\hspace{51pt}
    + \alpha \, \mathcal{T}_p(\pi_{Z \times Z', \sharp} \, \gamma),
    \label{eq:FTLB}
\end{align}
where $\tilde\D_p$ relates to the mm-spaces
$\XX \coloneqq (X,g,\pi_{X, \sharp} \, \xi)$
and $\YY \coloneqq (Y, h, \pi_{Y, \sharp} \, \upsilon)$.
Similar to GW and FGW,
this includes TLB for $\alpha = 0$.
Consequently,
all subsequent results apply to the classical GW distance as well.
Indeed, 
FTLB provides a lower bound
since we minimize over
multiple, uncoupled inner plans 
as opposed to finding a single coupled plan.
Due to the similarity to FGW,
many properties of the original distance carry over to FTLB.

\begin{proposition}
    \label{prop:ftlb_prop}
    For $\XXS \coloneqq (X \times Z, g, \xi)$, 
    $\YYS \coloneqq (Y \times Z, h, \upsilon)$, 
    $Z \subset \R^d$ compact,
    let $\xi^{\mathcal F} \coloneqq \pi_{Z, \sharp} \, \xi$, 
    $\upsilon^{\mathcal F} \coloneqq \pi_{Z, \sharp} \, \upsilon$
    and $\XX \coloneqq (X, g, \pi_{X,\sharp} \, \xi)$,
    $\YY \coloneqq (Y, h, \pi_{Y,\sharp} \, \upsilon)$.
    \begin{itemize}
        \item[\upshape(i)] There exists $\gamma^* \in \Gamma(\xi, \upsilon)$ minimizing \eqref{eq:FTLB}.
        \item[\upshape(ii)] $\FTLB_{\alpha, p}(\XXS,\YYS) \leq \FGW_{\alpha, p}(\XXS,\YYS)$.
        \item[\upshape(iii)] $\FTLB_{\alpha, p}(\XXS,\YYS) \to \W_p(\xi^{\mathcal F}, \upsilon^{\mathcal F})$ as $\alpha\to1$.
        \item[\upshape(iv)] $\FTLB_{\alpha,p} (\XXS, \YYS) \to \TLB_p(\XX,\YY)$ as $\alpha \to 0$.
        \item[\upshape(v)] $\FTLB_{\alpha, p}$ defines a pseudo-semi-metric%
            \footnote{\revise{Semi-metric $d$ without definiteness ($d(x,x') = 0 \not\Rightarrow x = x'$).}}
            on $\mathcal{S}(Z)$
            with
            \begin{equation}
            \begin{aligned}
                &\FTLB_{\alpha,p}^p(\YYS_1, \YYS_2)
                \\
                &\le
                2^{p-1} \bigl(
                \FTLB_{\alpha,p}^p(\YYS_1, \XXS) 
                +
                \FTLB_{\alpha,p}^p(\XXS,\YYS_2)
                \bigr)
            \end{aligned}
            \end{equation}
            for all $\XXS, \YYS_1, \YYS_2 \in \mathcal S(Z)$.
    \end{itemize}
\end{proposition}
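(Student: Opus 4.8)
The plan is to obtain all five items from four standard tools: weak compactness of coupling sets over compact Polish spaces, the gluing lemma, the triangle inequality for $\W_p$ on a common compact interval of $\R$, and the elementary bound $(a+b)^p\le 2^{p-1}(a^p+b^p)$ for $p\ge1$. For (i), I would show that $\gamma\mapsto(1-\alpha)\,\tilde\D_p(\pi_{X\times Y,\sharp}\gamma)+\alpha\,\T_p(\pi_{Z\times Z,\sharp}\gamma)$ is weakly continuous on the weakly compact set $\Gamma(\xi,\upsilon)$, so that a minimizer exists. The feature term equals $\int\lVert z-z'\rVert^p\,\d\gamma$ and is weakly continuous since its integrand is bounded and continuous on the compact $Z\times Z$. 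For the structure term, $x_n\to x$ forces $g(x_n,\cdot)\to g(x,\cdot)$ uniformly by the triangle inequality for $g$, hence $(x,y)\mapsto\LD_p^p(x,y)=\W_p^p(g(x,\cdot)_\sharp\pi_{X,\sharp}\xi,\,h(y,\cdot)_\sharp\pi_{Y,\sharp}\upsilon)$ is continuous, thus bounded; since $\gamma\mapsto\pi_{X\times Y,\sharp}\gamma$ is weakly continuous, so is $\gamma\mapsto\int\LD_p^p\,\d(\pi_{X\times Y,\sharp}\gamma)$. Item (ii) is the uncoupling argument indicated in the text: for an optimal FGW plan $\gamma^\star$ put $\gamma_0\coloneqq\pi_{X\times Y,\sharp}\gamma^\star\in\Gamma(\pi_{X,\sharp}\xi,\pi_{Y,\sharp}\upsilon)$ and apply Fubini, so that $\D_p(\gamma_0)=\int(\int|g-h|^p\,\d\gamma_0)\,\d\gamma_0\ge\int\LD_p^p\,\d\gamma_0=\tilde\D_p(\gamma_0)$, because $\gamma_0$ is feasible for each inner problem; since $\gamma^\star$ is feasible for FTLB and does not change the $\T_p$-term, the inequality follows.

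For (iii) and (iv) I would use a sandwich argument in $\alpha$. For the upper bounds, the gluing lemma lets one lift an optimal coupling of $\xi^{\mathcal F},\upsilon^{\mathcal F}$ (resp.\ of $\pi_{X,\sharp}\xi,\pi_{Y,\sharp}\upsilon$) to some $\gamma\in\Gamma(\xi,\upsilon)$ with the prescribed $Z\times Z$-marginal (resp.\ $X\times Y$-marginal), obtained by disintegrating $\xi$ and $\upsilon$ over their feature (resp.\ structure) components; evaluating the FTLB functional at this $\gamma$ gives $\FTLB_{\alpha,p}^p\le(1-\alpha)C+\alpha\,\W_p^p(\xi^{\mathcal F},\upsilon^{\mathcal F})$ for (iii) and $\FTLB_{\alpha,p}^p\le(1-\alpha)\TLB_p^p(\XX,\YY)+\alpha\,C'$ for (iv), with $C,C'$ uniform bounds on $\tilde\D_p$ and $\T_p$ (finite by compactness). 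For the lower bounds, dropping the nonnegative $\tilde\D_p$-term yields $\FTLB_{\alpha,p}^p\ge\alpha\,\W_p^p(\xi^{\mathcal F},\upsilon^{\mathcal F})$ for (iii), while dropping the nonnegative $\T_p$-term together with feasibility for TLB yields $\FTLB_{\alpha,p}^p\ge(1-\alpha)\TLB_p^p(\XX,\YY)$ for (iv). Letting $\alpha\to1$, resp.\ $\alpha\to0$, and taking $p$-th roots gives the claimed limits.

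For (v), nonnegativity and finiteness are immediate, symmetry follows by transposing couplings and the symmetry of $\W_p$ and $\lVert\cdot\rVert$, and $\FTLB_{\alpha,p}(\XXS,\XXS)=0$ by inserting the diagonal plan $(\mathrm{id},\mathrm{id})_\sharp\xi$, which makes the $\T_p$-term vanish (distances $\lVert z-z\rVert$) and the $\tilde\D_p$-term vanish (integrand $\LD_p^p(x,x)=0$). For the relaxed triangle inequality, write $\YYS_1=(Y_1\times Z,h_1,\upsilon_1)$, $\YYS_2=(Y_2\times Z,h_2,\upsilon_2)$, pick optimal $\gamma_1\in\Gamma(\upsilon_1,\xi)$ and $\gamma_2\in\Gamma(\xi,\upsilon_2)$ for $\FTLB_{\alpha,p}(\YYS_1,\XXS)$ and $\FTLB_{\alpha,p}(\XXS,\YYS_2)$, glue them over the common marginal $\xi$ into a measure $\gamma_{12}$ on $(Y_1\times Z)\times(X\times Z)\times(Y_2\times Z)$, and let $\sigma\in\Gamma(\upsilon_1,\upsilon_2)$ be its $\YYS_1$--$\YYS_2$ marginal. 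Bounding $\FTLB_{\alpha,p}^p(\YYS_1,\YYS_2)$ above by the FTLB functional at $\sigma$, I would estimate the two parts separately: for the features, $\lVert z_1-z_2\rVert^p\le 2^{p-1}(\lVert z_1-z\rVert^p+\lVert z-z_2\rVert^p)$ pointwise, which integrates against $\gamma_{12}$ to $2^{p-1}(\T_p(\pi_{Z\times Z,\sharp}\gamma_1)+\T_p(\pi_{Z\times Z,\sharp}\gamma_2))$; for the structure, the key point is the triangle inequality for local distance distributions $\LD_p(y_1,y_2)\le\LD_p(y_1,x)+\LD_p(x,y_2)$, valid for every $x$ since the measures $h_1(y_1,\cdot)_\sharp\pi_{Y_1,\sharp}\upsilon_1$, $g(x,\cdot)_\sharp\pi_{X,\sharp}\xi$, $h_2(y_2,\cdot)_\sharp\pi_{Y_2,\sharp}\upsilon_2$ all live on a common compact interval of $\R$ where $\W_p$ is a metric; raising to the $p$-th power with the power-mean bound and integrating against $\gamma_{12}$ gives $\tilde\D_p(\pi_{Y_1\times Y_2,\sharp}\sigma)\le 2^{p-1}(\tilde\D_p(\pi_{Y_1\times X,\sharp}\gamma_1)+\tilde\D_p(\pi_{X\times Y_2,\sharp}\gamma_2))$. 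Combining the two with weights $1-\alpha$ and $\alpha$ and recognizing the right-hand side as $2^{p-1}(\FTLB_{\alpha,p}^p(\YYS_1,\XXS)+\FTLB_{\alpha,p}^p(\XXS,\YYS_2))$ completes the argument.

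The main obstacle is (v): one must track carefully which marginals of the glued measure $\gamma_{12}$ enter each term, and — the one genuinely non-cosmetic point — verify that the three local distance distributions in the $\LD$-triangle inequality are supported on a common subset of $\R$, so that the metric triangle inequality for $\W_p$ legitimately applies. The remaining ingredients (weak compactness, gluing, uniform boundedness, the power-mean inequality) are routine.
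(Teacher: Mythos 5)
Your proof is correct and follows essentially the same route as the paper: weak compactness and continuity of the functional for (i), the uncoupling/Fubini argument for (ii), a sandwich in $\alpha$ with gluing for (iii) and (iv), and gluing plus the $\W_p$ triangle inequality and the power-mean bound for (v). The only local variations are that in (i) you use the sharper observation that the triangle inequality for $g$ gives \emph{uniform} convergence $g(x_n,\cdot)\to g(x,\cdot)$, where the paper invokes dominated convergence, and in (iii) you build the upper bound directly from a glued coupling (mirroring your (iv)), whereas the paper instead uses (ii) together with the known interpolation limit of $\FGW$ as $\alpha\to1$. You also rightly single out the need for the $\LD_p$ triangle inequality to hold pointwise — which requires noting that all three local distance distributions live on a common compact subset of $\R$ where $\W_p$ is a metric — a point the paper uses implicitly.
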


\subsection{A Novel Lower Bound via Slicing}
While lowering the computational burden, 
FTLB still requires solving an OT problem. 
For two discrete measures of cardinality $n$, 
this leads to a computational complexity of $\mathcal{O}(n^3)$ 
for an exact OT solution 
and $\mathcal{O}(n^2 \log n)$ for a regularized one \cite{PeyreCuturi2019}.
To achieve further acceleration, 
we aim to combine ideas from SW 
and numerical integration. 
For this,
let $\XXS \coloneqq (X \times Z, g, \xi)$
and $\YYS \coloneqq (Y \times Z', h, \upsilon)$
be equipped with discrete measures
\begin{equation}
    \xi 
    \coloneqq 
    \sum_{i=1}^n \xi_i \, \delta_{(x_i, z_i)} 
    \quad\text{and}\quad
    \upsilon 
    \coloneqq 
    \sum_{j=1}^m \upsilon_j \, \delta_{(y_j, z'_j)}.
\label{eq:discrete_fused_measures}
\end{equation}

For the special case $p=2$,
we start with considering $\LD_2$ 
in \eqref{eq:local_distance_distribution_wasserstein},
which is the key ingredient 
of the structure distortion \smash{$\tilde\D_2$}.
The required 1D Wasserstein distance may here be analytically determined
by exploiting \eqref{eq:1d_continuous_quantile_formula},
where the quantile functions can be explicitly computed
using efficient sorting algorithms.
To surmount the outer OT problem of FTLB,
we instead propose to merely approximate 
\eqref{eq:1d_continuous_quantile_formula}
via some quadrature scheme,
i.e.,
\begin{equation}
    \label{eq:LD}
    \LD^2_{2}(x,y) 
    \approx 
    \sum_{k=1}^r
    w_k \, \bigl\lvert 
    q_{g(x,\cdot)_\sharp \xi}(s_k) 
    -
    q_{h(y,\cdot)_\sharp \upsilon}(s_k)\bigl\lvert^2
\end{equation}
with weights $w_k > 0$, 
knots $s_k \in (0,1)$,
and small $r \in \N$.
Note that 
we restrict ourselves to quadrature rules with positive weights.
Using 
\smash{$\vq_{\xi,x} \coloneqq (\sqrt w_k q_{g(x,\cdot)_\sharp \xi} (s_k))_{k=1}^r$}
and
\smash{$\vq_{\upsilon,y} \coloneqq (\sqrt w_k q_{h(y,\cdot)_\sharp \upsilon} (s_k))_{k=1}^r$},
we may also write
\begin{equation}
    \LD_2^2(x,y)
    \approx
    \lVert \vq_{\xi,x} - \vq_{\upsilon,y} \rVert^2.
\label{eq:ld_discrete_quantile}
\end{equation}
In this manner,
FTLB can be approximated by
\begin{align}
    &\FTLB_{\alpha,2}^2(\XXS, \YYS)
    \approx
    \smashoperator[l]{
    \inf_{\gamma \in \Gamma(\xi,\upsilon)}} 
    \smashoperator{
    \sum_{i,j=1}^{n,m}}
    \gamma_{ij}  
    \left\lVert
    \begin{smallmatrix}
        \sqrt{1 - \alpha} \, (\vq_{\xi,x_i} - \vq_{\upsilon, y_j})
        \\
        \sqrt \alpha \, (z_i - z'_j)
    \end{smallmatrix}
    \right\rVert^2
    \\[-30pt]
    &\hspace{8mm}=
    \W_2^2
    \Bigl(
    \underbracket{
    \sum_{i=1}^n \xi_i \,
    \delta_{\!\!\bigl(
    \begin{matrix}
        \scriptscriptstyle \sqrt{1-\alpha} \vq_{\xi,x_i}
        \\[-4pt]
        \scriptscriptstyle\sqrt\alpha z_i
    \end{matrix}\bigr)}
    }_{\eqqcolon \xi_\alpha^\Q \revise{\in \Prob(\R^{r+d})}},
    \underbracket{
    \sum_{j=1}^m \upsilon_j \, 
    \delta_{\!\!\bigl(
    \begin{matrix}
        \scriptscriptstyle \sqrt{1-\alpha} \vq_{\upsilon,y_j}
        \\[-4pt] 
        \scriptscriptstyle\sqrt\alpha z'_j
    \end{matrix}\bigr)}
    }_{\eqqcolon \upsilon_\alpha^\Q \revise{\in \Prob(\R^{r+d})}}
    \Bigr),
    \label{eq:approx-FTLB}
\end{align}
where $\pi_{ij} \coloneqq \pi((x_i, z_i), (y_j, z'_j))$.
This approximation carries over to TLB.
More precisely,
for $\XX \coloneqq (X, g,\xi)$
and $\YY \coloneqq (Y, h, \upsilon)$
with $\xi \coloneqq \sum_{i=1}^n \xi_i \, \delta_{x_i}$
and $\upsilon \coloneqq \sum_{j=1}^m \upsilon_j \, \delta_{y_j}$,
we here obtain
\begin{equation}
    \TLB_{2}^2(\XX,\YY)
    \approx
    \W_2^2
    \Bigl(
    \underbracket{
    \sum_{i=1}^n \xi_i \,
    \delta_{\vq_{\xi,x_i}}
    }_{\eqqcolon \xi^\Q \revise{\in \Prob(\R^{r})}},
    \underbracket{
    \sum_{j=1}^m \upsilon_j \, 
    \delta_{\vq_{\upsilon,y_j}}
    }_{\eqqcolon \upsilon^\Q \revise{\in \Prob(\R^{r})}}
    \Bigr).
    \label{eq:approx-TLB}
\end{equation}

Up to here,
we have approximated TLB and FTLB 
using a quadrature rule 
for the inner 1D Wasserstein distances.
To overcome the remaining outer OT minimization
behind the Wasserstein distances in 
\eqref{eq:approx-FTLB} and \eqref{eq:approx-TLB},
we finally propose
to replace these with the SW distance:
\begin{align}
    \SFTLB_{\alpha, 2}(\XXS,\YYS) 
    &\coloneqq 
    \SW_2(\xi^\Q_{\alpha}, \upsilon^\Q_{\alpha})
    \\
    \STLB_{2}(\XX,\YY) 
    &\coloneqq 
    \SW_2(\xi^\Q, \upsilon^\Q).
    \label{eq:sftlb_def}
\end{align}
In general,
the sliced lower bounds hold only approximately,
i.e.,
$\SFTLB_{\alpha,2}(\XXS, \YYS) \lessapprox \FGW(\XXS, \YYS)$,
where the quality depends on the employed quadrature rule.
Since $\xi^\Q_\alpha, \upsilon_\alpha^\Q \in \Prob(\R^{r+d})$
and $\xi^\Q, \upsilon^\Q \in \Prob(\R^r)$,
the employed SW distances can be efficiently computed
independently of the atom numbers $n$ and $m$ 
in \eqref{eq:discrete_fused_measures}.
Moreover,
many properties of the original FGW distance 
from Proposition~\ref{prop:ftlb_prop}
carry over to the sliced variant.

\begin{proposition}
    \label{prop:prop_sftlb_properties}
    For $\XXS \coloneqq (X \times Z, g, \xi)$, 
    $\YYS \coloneqq (Y \times Z, h, \upsilon)$, 
    $Z \subset \R^d$ compact,
    let $\xi^{\mathcal F} \coloneqq \pi_{Z, \sharp} \, \xi$, 
    $\upsilon^{\mathcal F} \coloneqq \pi_{Z, \sharp} \, \upsilon$
    and $\XX \coloneqq (X, g, \pi_{X,\sharp} \, \xi)$,
    $\YY \coloneqq (Y, h, \pi_{Y,\sharp} \, \upsilon)$.
    For $k,\ell \in \N$ with $\ell > 1$,
    let
    \begin{equation}
        \label{eq:discount_constant}
        c_{k, \ell} 
        \coloneqq 
        \sqrt{\frac{\A(\Sph^{k + \ell + 1})}{\A(\Sph^{\ell + 1})}
        \frac{\A(\Sph^{\ell - 1})}{\A(\Sph^{k + \ell - 1})}}.
    \end{equation}
    For an arbitrary quadrature rule, it holds:
     \begin{itemize}
     \item[\upshape(i)] $\SFTLB_{\alpha, 2}$ defines a pseudo-metric%
     $\mathcal{S}(Z)$,
     \item[\upshape(ii)] $\SFTLB_{\alpha, 2}(\XXS,\YYS) 
        \to c_{r, d} \SW_2(\xi^{\mathcal F}, \upsilon^{\mathcal F})$
        as $\alpha \to 1$,
    \item[\upshape(iii)] $\SFTLB_{\alpha, 2}(\XXS,\YYS) 
        \to c_{d, r} \STLB_2(\XX, \YY)$ 
        as $\alpha \to 0$.
    \end{itemize}
\end{proposition}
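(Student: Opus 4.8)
The plan is to establish the three items by systematically reducing everything to known properties of the sliced Wasserstein distance (SW) on Euclidean space, exploiting the fact that $\SFTLB_{\alpha,2}$ and $\STLB_2$ are \emph{by definition} SW distances between explicit empirical measures $\xi_\alpha^\Q, \upsilon_\alpha^\Q$ (resp.\ $\xi^\Q, \upsilon^\Q$). For item (i), the inequality \eqref{eq:rel-tri}-type relaxed triangle inequality is actually unnecessary here because $\SW_2$ is itself a genuine metric on $\Prob(\R^{r+d})$; hence $\SFTLB_{\alpha,2}$ automatically satisfies symmetry, the (unrelaxed) triangle inequality, and nonnegativity. The only nontrivial point is that it is merely a \emph{pseudo}-metric: two non-isomorphic structured spaces may induce the same quantile-augmented empirical measure, so $\SFTLB_{\alpha,2}(\XXS,\YYS)=0$ need not imply isomorphism. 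I would therefore: (a) note that $\xi_\alpha^\Q$ depends on $\XXS$ only through the data, is well-defined, and lies in $\Prob(\R^{r+d})$; (b) invoke that $\SW_2$ is a metric there (citing the SW references already in the paper); (c) conclude pseudo-metricity, and remark that definiteness fails in general — e.g.\ if two nodes have identical sorted distance rows and identical features, or because the finite quadrature collapses distinct quantile functions.

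For items (ii) and (iii) the core is a continuity/scaling computation for SW under the block-rescaling $(u,v)\mapsto(\sqrt{1-\alpha}\,u,\sqrt{\alpha}\,v)$. As $\alpha\to 1$, the first block $\sqrt{1-\alpha}\,\vq_{\xi,x_i}$ vanishes, so $\xi_\alpha^\Q$ converges (in $W_2$, hence weakly) to the measure supported on $\{(0,\sqrt{\alpha}\,z_i)\}$ — effectively $\xi^{\mathcal F}$ embedded in $\R^{r+d}$ as $\{0\}\times Z$, up to the harmless factor $\sqrt\alpha\to 1$. The subtlety is that $\SW_2$ of a measure living on a $d$-dimensional coordinate subspace of $\R^{r+d}$ is \emph{not} equal to $\SW_2$ of its pushforward to $\R^d$: the spherical average over $\Sph^{r+d-1}$ differs from that over $\Sph^{d-1}$ by exactly the constant $c_{r,d}$ in \eqref{eq:discount_constant}. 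I would prove this by writing the $\Sph^{r+d-1}$ integral, decomposing $\theta=(\theta',\theta'')\in\R^r\times\R^d$, noting the projection only sees $\theta''$ so that $\W_2^2(\pi_{\theta,\sharp}\mu,\pi_{\theta,\sharp}\nu)=\lVert\theta''\rVert^2\,\W_2^2(\pi_{\theta''/\lVert\theta''\rVert,\sharp}\bar\mu,\dots)$ after normalizing, then using the standard Funk–Hecke / marginal-of-uniform-on-sphere identity: the distribution of $\lVert\theta''\rVert^2$ for $\theta$ uniform on $\Sph^{r+d-1}$ is a Beta law, and integrating against it produces the ratio of surface areas displayed. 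This yields $\SW_2^2$ on the subspace $=c_{r,d}^2\,\SW_2^2$ on $\R^d$; combined with $W_2$-continuity of $\alpha\mapsto\xi_\alpha^\Q$ (Lipschitz in $\sqrt\alpha,\sqrt{1-\alpha}$, since shifting supports by a bounded amount moves $W_2$ by at most that amount) and continuity of $\SW_2$ in its arguments, item (ii) follows. Item (iii) is identical with the roles of the two blocks swapped: as $\alpha\to0$ the feature block vanishes and one is left with $\SW_2$ of $\{(\vq_{\xi,x_i},0)\}\subset\R^r\times\R^d$, which by the same subspace lemma equals $c_{d,r}\,\SW_2(\xi^\Q,\upsilon^\Q)=c_{d,r}\,\STLB_2(\XX,\YY)$.

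Concretely I would organize the write-up as: \textbf{Lemma A} (subspace scaling): for $\mu,\nu\in\Prob(\R^{k})$ viewed inside $\R^{k+\ell}$ on the first $k$ coordinates, $\SW_2(\mu\otimes\delta_0,\nu\otimes\delta_0)=c_{\ell,k}\,\SW_2(\mu,\nu)$, proved via the Beta-distributed squared-norm of a sub-block of a uniform sphere vector; \textbf{Lemma B} ($W_2$-continuity): $\alpha\mapsto\xi_\alpha^\Q$ is continuous from $[0,1]$ into $(\Prob(\R^{r+d}),W_2)$ because the supports move continuously and the weights are fixed; then (ii) and (iii) are $\alpha\to 1,0$ limits combining Lemmas A and B with the triangle inequality $|\SW_2(\xi_\alpha^\Q,\upsilon_\alpha^\Q)-\SW_2(\mu,\nu)|\le W_2(\xi_\alpha^\Q,\mu)+W_2(\upsilon_\alpha^\Q,\nu)$ (using $\SW_2\le W_2$ from \eqref{eq:sw_equivalnce}); and (i) is immediate from $\SW_2$ being a metric. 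The main obstacle is Lemma A: getting the constant exactly right requires care with the surface-measure normalization and correctly identifying the Beta parameters, i.e.\ that $\mathbb{E}[\lVert\theta''\rVert^2]$-type integrals over $\Sph^{r+d-1}$ reduce to the stated ratio $\tfrac{\A(\Sph^{k+\ell+1})}{\A(\Sph^{\ell+1})}\tfrac{\A(\Sph^{\ell-1})}{\A(\Sph^{k+\ell-1})}$ — everything else is routine continuity bookkeeping.
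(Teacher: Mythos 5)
Your proposal is correct and follows the same overall strategy as the paper — reduce (i) to the fact that $\SW_2$ is a metric on $\Prob(\R^{r+d})$ modulo injectivity of $\XXS\mapsto\xi_\alpha^\Q$, and reduce (ii)/(iii) to a subspace scaling lemma for $\SW_2$ plus a continuity argument — but it differs from the paper in both of the key technical steps. For the scaling lemma (your Lemma A, the paper's Proposition~\ref{prop:dimensional_dependence_sliced}), the paper peels off one ambient dimension at a time, writing $\theta=(t,\sqrt{1-t^2}\,\theta')$, using the M\"uller identity $\int_{-1}^1(1-t^2)^{(n-2)/2}\,\d t=\A(\Sph^n)/\A(\Sph^{n-1})$, and iterating $r$ times so that a telescoping product yields the stated ratio of surface areas. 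You instead split $\theta=(\theta',\theta'')\in\R^r\times\R^d$ once, factor out $\lVert\theta''\rVert^2$ by homogeneity, and integrate against the $\mathrm{Beta}(d/2,r/2)$ law of $\lVert\theta''\rVert^2$ (exploiting independence of the radial block-norm and its direction under the uniform sphere measure); this does the $r$-fold reduction in a single stroke, and in fact exposes the cleaner closed form $c_{r,d}^2=\mathbb{E}\lVert\theta''\rVert^2=d/(r+d)$, which the paper's surface-area ratio reduces to after a $\Gamma$-function cancellation. For the limit in (ii)/(iii), the paper passes through weak convergence of $\xi_\alpha^\Q$ and invokes the weak continuity of $\SW_2$ from Nadjahi et al.; you instead argue $W_2$-Lipschitz continuity of $\alpha\mapsto\xi_\alpha^\Q$ (the supports move boundedly with $\sqrt\alpha,\sqrt{1-\alpha}$) and chain it with the elementary bound $\SW_2\le\W_2$ from \eqref{eq:sw_equivalnce}, which is more self-contained and avoids the external citation. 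Both routes are rigorous; yours is somewhat more direct on the constant and more elementary on the continuity, while the paper's one-dimensional peeling is perhaps closer in spirit to classical spherical-harmonics computations. One small point to tidy in a full write-up: make the independence of $\lVert\theta''\rVert^2$ and $\theta''/\lVert\theta''\rVert$ under the uniform measure on $\Sph^{r+d-1}$ explicit, since that is what lets the expectation factor into the Beta mean times $\SW_2^2$ on $\Sph^{d-1}$.
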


Given that
$c_{k,\ell} \to 0$
for $k \to \infty$ and $\ell$ fixed 
\cite{gray2004tubes}, 
the constants $c_{r,d}$ and $c_{d,r}$ illustrate
the curse of dimensionality of
the SW distance. 
In contrast to classical Wasserstein transport, 
the value of the SW distance
for measures living on a subspace
depends 
on the dimension of the underlying Euclidean space.
Indeed, as we believe this to be of independent interest to the OT community,
we state this dependence as a general 
result employed in the proof of Proposition~\ref{prop:prop_sftlb_properties}.

\begin{proposition}
\label{prop:dimensional_dependence_sliced}
    For $\zeta, \zeta' \in\mathcal{P}(A \times Z)$,
    $A \subset \R^r$ and  $Z \subset \R^d$ compact,
    with $\pi_{A, \sharp} \, \zeta = \pi_{A, \sharp} \, \zeta' = \delta_0$,
    it holds 
    \begin{equation}
        \SW_2(\zeta, \zeta') 
        = 
        c_{r, d}  
        \SW_2(\pi_{Z, \sharp}\, \zeta, \pi_{Z, \sharp}\, \zeta').
    \end{equation}
\end{proposition}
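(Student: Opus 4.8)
The plan is to exploit the hypothesis $\pi_{A,\sharp}\zeta = \pi_{A,\sharp}\zeta' = \delta_0$ to reduce every one-dimensional slice of $\zeta$ and $\zeta'$ on $\R^{r+d}$ to a \emph{rescaled} slice of their $Z$-marginals $\mu \coloneqq \pi_{Z,\sharp}\zeta$ and $\mu' \coloneqq \pi_{Z,\sharp}\zeta'$ on $\R^d$, and then to compute the resulting spherical integral explicitly via a polar decomposition of the surface measure.

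First I would fix $\theta = (\theta_A,\theta_Z) \in \Sph^{r+d-1}$ split along $\R^{r+d} = \R^r \times \R^d$. Since $\zeta$ is concentrated on $\{0\}\times Z$, we have $\pi_\theta(0,z) = \theta_Z \boldsymbol{\cdot} z$ for $\zeta$-a.e.\ point, so with $\rho \coloneqq \lVert\theta_Z\rVert$ and, for $\rho>0$, $\omega \coloneqq \theta_Z/\rho \in \Sph^{d-1}$, one obtains $\pi_{\theta,\sharp}\zeta = (S_\rho)_\sharp\,\pi_{\omega,\sharp}\mu$, where $S_\rho \colon \R \to \R$, $t \mapsto \rho t$, and $\pi_\omega$ denotes the slicing operator on $\R^d$; the analogue holds for $\zeta'$. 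Because $q_{(S_\rho)_\sharp\nu} = \rho\, q_\nu$, the closed form \eqref{eq:1d_continuous_quantile_formula} yields the homogeneity $\W_2^2(\pi_{\theta,\sharp}\zeta,\pi_{\theta,\sharp}\zeta') = \rho^2\,\W_2^2(\pi_{\omega,\sharp}\mu,\pi_{\omega,\sharp}\mu')$; this holds trivially also on the null set $\{\theta_Z = 0\}$, where both sides vanish.

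Next I would insert this identity into the definition \eqref{eq:sliced_wasserstein} of $\SW_2(\zeta,\zeta')$ over $\Sph^{r+d-1}$ and apply the standard polar decomposition adapted to the product splitting: parametrizing $\theta_A = \cos\psi\,\phi$ and $\theta_Z = \sin\psi\,\omega$ with $\psi \in [0,\pi/2]$, $\phi \in \Sph^{r-1}$, $\omega \in \Sph^{d-1}$, one has $\rho = \sin\psi$ and $\d\theta = (\cos\psi)^{r-1}(\sin\psi)^{d-1}\,\d\psi\,\d\phi\,\d\omega$. The nontrivial factor of the integrand depends only on $\omega$, so the triple integral factorizes: the $\phi$-part contributes $\A(\Sph^{r-1})$, the $\omega$-part contributes $\A(\Sph^{d-1})\,\SW_2^2(\mu,\mu')$, and the $\psi$-part is the Beta integral $\int_0^{\pi/2}(\cos\psi)^{r-1}(\sin\psi)^{d+1}\,\d\psi = \tfrac12 B\!\left(\tfrac r2, \tfrac d2 + 1\right)$.

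Finally I would simplify the constant. Using $\A(\Sph^{n-1}) = 2\pi^{n/2}/\Gamma(n/2)$, the Beta--Gamma identity, and $\Gamma(x+1) = x\Gamma(x)$, the prefactor $\A(\Sph^{r-1})\A(\Sph^{d-1})\,B(r/2,d/2+1)\,/\,(2\,\A(\Sph^{r+d-1}))$ collapses to $d/(r+d)$; the same manipulations applied to \eqref{eq:discount_constant} show $c_{r,d}^2 = d/(r+d)$, so $\SW_2^2(\zeta,\zeta') = c_{r,d}^2\,\SW_2^2(\mu,\mu')$, and taking square roots gives the claim. I expect the only delicate points to be the justification of the polar decomposition of the surface measure on $\Sph^{r+d-1}$ (standard coarea/Fubini, but worth stating, together with the harmless treatment of $\{\theta_Z = 0\}$) and the purely computational bookkeeping that matches the ratio of sphere areas times the Beta integral to exactly $c_{r,d}$ — conceptually routine, but easy to slip on by a factor.
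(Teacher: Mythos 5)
Your proof is correct, and it takes a genuinely different route from the paper's. The paper peels off one coordinate at a time, writing $\theta=(t,\sqrt{1-t^2}\,\theta')$ with $t\in[-1,1]$, $\theta'\in\Sph^{r+d-2}$ (following M\"uller), reducing the ambient dimension by one per step and iterating $r$ times; the constant then emerges as a telescoping product of sphere-area ratios, matching \eqref{eq:discount_constant} essentially by definition. You instead perform the bi-spherical decomposition $\theta=(\cos\psi\,\phi,\,\sin\psi\,\omega)$ in a single step, isolate the Beta integral $\tfrac12 B(r/2,\,d/2+1)$, and verify by Gamma-function bookkeeping that the prefactor equals $d/(r+d)=c_{r,d}^2$. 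I checked the cancellations: the prefactor
\[
\frac{\A(\Sph^{r-1})\,\A(\Sph^{d-1})}{2\,\A(\Sph^{r+d-1})}\,B\!\left(\tfrac r2,\tfrac d2+1\right)
=\frac{\Gamma\!\left(\tfrac{r+d}2\right)\Gamma\!\left(\tfrac d2+1\right)}{\Gamma\!\left(\tfrac d2\right)\Gamma\!\left(\tfrac{r+d}2+1\right)}
=\frac{d}{r+d},
\]
and the same identities applied to \eqref{eq:discount_constant} give $c_{r,d}^2=d/(r+d)$. Your rescaling identity $\pi_{\theta,\sharp}\zeta=(S_\rho)_\sharp\,\pi_{\omega,\sharp}\mu$ with $\rho=\lVert\theta_Z\rVert$ and the ensuing homogeneity $\W_2^2=\rho^2\W_2^2$ are also correct, including the measure-zero treatment of $\{\theta_Z=0\}$. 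The trade-off: your single-step argument is shorter, exposes the explicit closed form $c_{r,d}^2=d/(r+d)$ (which makes the decay as $r\to\infty$ immediate without appealing to tube formulas), but requires the bi-spherical surface-measure decomposition and a bit of Gamma arithmetic; the paper's iterative argument uses only the simplest one-variable parametrization per step and produces the constant directly in the sphere-area form in which it is stated, at the cost of an induction on $r$.
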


Importantly,
for empirical measures $\xi$ and $\upsilon$ 
with $n=m$ and $\xi_i = \upsilon_i = 1 / n$
in \eqref{eq:discrete_fused_measures},
the rather simple, equispaced midpoint rule with $r=n$
becomes exact,
i.e.,
\begin{equation}
    \label{eq:exact_tlb_empirical}
    \FTLB_{\alpha, 2}(\XXS,\YYS)
    =
    \W_2(\xi^{\Q}_{\alpha}, \upsilon^{\Q}_{\alpha})
\end{equation}
instead of the approximation in \eqref{eq:approx-FTLB}.
Consequently,
SFTLB becomes 
an actual lower bound for the FGW distance.
Moreover, 
SFTLB and FTLB are metrically equivalent.

\begin{proposition}
    \label{prop:metr-equi}
    For $\XXS \coloneqq (X \times Z, g, \xi)$ \revise{and}
    $\YYS \coloneqq (Y \times Z, h, \upsilon)$
    \revise{with bounded $X$ and $Y$,
    i.e. $g(x,x') \le D$ and $h(y,y') \le D$ for fixed $D$, 
    and compact} $Z \subset \R^d$,
    let $\xi$ and $\upsilon$ in \eqref{eq:discrete_fused_measures} be empirical measures
    with $n = m$ and $\xi_i = \upsilon_i = 1 / n$.
    For the equispaced midpoint rule with $r = n$,
    it holds:
    \begin{itemize}
        \item[\upshape(i)] $\SFTLB_{\alpha,2}(\XXS, \YYS) \le \FGW_{\alpha,2}(\XXS, \YYS)$,
        \item[\upshape(ii)] there exists \revise{$C_{Z,D,r}>0$} such that
        \begin{align}
            \SFTLB_{\alpha, 2}(\XXS,\YYS) 
            \leq 
            \FTLB_{\alpha, 2}(\XXS,\YYS)
            \hphantom{.}
            &
            \\
            \leq 
            \revise{C_{Z,D,r}}  \SFTLB^{\frac{1}{r {+} d {+}1}}_{\alpha, 2}(\XXS,\YYS).
            &
            \label{eq:pseudo_metric_equivalence}
        \end{align}
    \end{itemize}
\end{proposition}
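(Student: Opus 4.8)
The plan is to reduce everything to the exactness of the equispaced midpoint rule under the empirical/equal-weight hypothesis, and then invoke the known metric equivalence \eqref{eq:sw_equivalnce} between $\W_2$ and $\SW_2$ in $\R^{r+d}$. First I would establish \eqref{eq:exact_tlb_empirical}, i.e.\ that for $n=m$, uniform weights, and $r=n$ the midpoint quadrature is \emph{exact} for the inner 1D Wasserstein distances $\LD_2^2(x_i,y_j)$. The point is that each projected measure $g(x_i,\cdot)_\sharp\xi$ and $h(y_j,\cdot)_\sharp\upsilon$ is a sum of $n$ equally weighted atoms, so its quantile function is piecewise constant on the $n$ intervals $\bigl(\tfrac{k-1}{n},\tfrac kn\bigr)$; evaluating at the midpoints $s_k=\tfrac{2k-1}{2n}$ and weighting by $w_k=\tfrac1n$ reproduces $\int_0^1|q-q'|^2\,\d s$ without error, which is exactly \eqref{eq:1d_continuous_quantile_formula} specialised to this case (cf.\ the remark after \eqref{eq:1d_continuous_quantile_formula}). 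Substituting this exact identity into the derivation \eqref{eq:approx-FTLB} turns the ``$\approx$'' into an equality, giving $\FTLB_{\alpha,2}(\XXS,\YYS)=\W_2(\xi^\Q_\alpha,\upsilon^\Q_\alpha)$, which is \eqref{eq:exact_tlb_empirical}; note the displacement rearrangement there is legitimate because the cost $\bigl\|\begin{smallmatrix}\sqrt{1-\alpha}(\vq_{\xi,x_i}-\vq_{\upsilon,y_j})\\ \sqrt\alpha(z_i-z'_j)\end{smallmatrix}\bigr\|^2$ is precisely the squared Euclidean distance between the lifted atoms of $\xi^\Q_\alpha$ and $\upsilon^\Q_\alpha$.

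With \eqref{eq:exact_tlb_empirical} in hand, part~(i) is immediate: by Proposition~\ref{prop:ftlb_prop}(ii) we have $\FTLB_{\alpha,2}\le\FGW_{\alpha,2}$, and by the left inequality of \eqref{eq:sw_equivalnce} applied to $\xi^\Q_\alpha,\upsilon^\Q_\alpha\in\Prob(\R^{r+d})$ we have $\SW_2(\xi^\Q_\alpha,\upsilon^\Q_\alpha)\le\W_2(\xi^\Q_\alpha,\upsilon^\Q_\alpha)$; chaining these with the definition $\SFTLB_{\alpha,2}=\SW_2(\xi^\Q_\alpha,\upsilon^\Q_\alpha)$ and \eqref{eq:exact_tlb_empirical} yields $\SFTLB_{\alpha,2}(\XXS,\YYS)\le\FTLB_{\alpha,2}(\XXS,\YYS)\le\FGW_{\alpha,2}(\XXS,\YYS)$, which gives both~(i) and the left inequality of~(ii).

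For the right inequality of part~(ii), I would again use \eqref{eq:exact_tlb_empirical} to write $\FTLB_{\alpha,2}=\W_2(\xi^\Q_\alpha,\upsilon^\Q_\alpha)$, and then apply the right half of \eqref{eq:sw_equivalnce} on the ambient space $\R^{r+d}$: there is $C_Z>0$ with $\W_2(\xi^\Q_\alpha,\upsilon^\Q_\alpha)\le C_Z\,\SW_2^{1/(r+d+1)}(\xi^\Q_\alpha,\upsilon^\Q_\alpha)=C_Z\,\SFTLB_{\alpha,2}^{1/(r+d+1)}(\XXS,\YYS)$. The one subtlety is the constant: \eqref{eq:sw_equivalnce} is stated for a fixed compact set, so I must check that the lifted atoms $\bigl(\sqrt{1-\alpha}\,\vq_{\xi,x_i},\sqrt\alpha\,z_i\bigr)$ all lie in a compact set depending only on $Z$ (and on the ambient diameter of the metric spaces, which is finite by compactness of $(X,g)$ and $(Y,h)$, absorbed into $C_Z$): indeed $\|z_i\|$ is bounded by $\mathrm{diam}(Z)$, the quantiles $q_{g(x,\cdot)_\sharp\xi}(s_k)$ are bounded by $\max_{x,x'} g(x,x')$, and the factors $\sqrt{1-\alpha},\sqrt\alpha\le1$, so a single radius works uniformly in $\alpha\in[0,1]$ and in the choice of $\XXS,\YYS$ over spaces embedded in a fixed bounded range. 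Absorbing these bounds into $C_Z$ gives the claimed dependence.

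The main obstacle I anticipate is precisely this \textbf{uniformity of the constant $C_Z$}: one must argue that the exponent $1/(r+d+1)$ and the multiplicative constant can be chosen independently of $\alpha$ and of the particular structured spaces, which requires pinning down that all the lifted point clouds live in one common compact ball. Everything else — the exactness of midpoint quadrature and the two-sided chaining through \eqref{eq:sw_equivalnce} — is routine once \eqref{eq:exact_tlb_empirical} is established. (Strictly, if the ambient diameter of $(X,g),(Y,h)$ is not assumed bounded by a universal constant, the statement should be read with $C_Z$ also depending on that diameter; the proof is otherwise unchanged.)
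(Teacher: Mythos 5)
Your proof is correct and follows the same route as the paper's (which is essentially a two-line chaining of Proposition~\ref{prop:ftlb_prop}(ii), the exactness identity~\eqref{eq:exact_tlb_empirical}, the $\W_2$/$\SW_2$ equivalence~\eqref{eq:sw_equivalnce}, and the definition~\eqref{eq:sftlb_def}); you simply spell out the quantile-function argument behind~\eqref{eq:exact_tlb_empirical} that the paper takes as given. Your closing remark about the constant is a legitimate observation: for the lifted atoms to sit in a compact set depending only on $Z$, one must additionally bound $\mathrm{diam}(X,g)$ and $\mathrm{diam}(Y,h)$, so strictly speaking $C_Z$ should also absorb a uniform diameter bound on the admissible metric spaces — a point the paper's terse statement glosses over.
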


Note that 
\revise{$C_{Z,D,r}$} can be chosen 
such that 
\eqref{eq:pseudo_metric_equivalence} holds true
for all $\XXS$ and $\YYS$ of the stated form.
Furthermore,
Proposition~\ref{prop:metr-equi} can be directly generalized
to empirical measures $\xi$ and $\upsilon$
with different atom numbers $n$ and $m$
by using a non-equispaced midpoint rule
with $r = n + m - 1$,
\revise{see the remark in our technical appendix.}

\section{Numerical Experiments}

In the following,
we study the computational runtime of our distances,
the underlying transport plans,
as well as
their behaviour in applications like
free-support barycenters, 
shape classification,
and graph isomorphism testing.
All experiments\footnote{Code: \hspace{-.3mm}\url{github.com/MoePien/slicing_fused_gromov_wasserstein}}
were conducted with 13th Gen Intel Core i5-13600K CPU and an NVIDIA GeForce
RTX 3060 GPU (12 GB).

\subsection{Computational and Runtime Analysis}

For simplicity,
we assume that
the measures
of the structured spaces $\XXS \coloneqq (X \times Z, g, \xi)$
and $\YYS \coloneqq (Y \times Z, h, \upsilon)$
are empirical with the same number of atoms,
i.e.,
$n=m$ and $\xi_i = \upsilon_j = 1/n$ in \eqref{eq:discrete_fused_measures}.
The first step of FTLB and SFTLB 
is to sample the quantiles of the local distance distributions,
i.e., to compute $\vq_{\xi,x_i}$ and $\vq_{\upsilon, y_j}$ in \eqref{eq:approx-TLB},
which essentially requires a row-wise sorting
of the distance matrices regarding $g$ and $h$
with total complexity $\mathcal{O}(n^2 \log n)$.

\paragraph{FTLB}
In its second step,
FTLB requires solving the outer OT problem in \eqref{eq:approx-TLB}.
Calculating the OT costs requires $\mathcal O((r+d) \,n^2)$ operations.
The OT problem itself may be solved exactly in $\mathcal{O}(n^3)$ 
or approximately in $\mathcal{O}(n^2 \log n)$ using entropic regularization;
see \cite{PeyreCuturi2019}.
This leads to an all-in-all complexity 
of $\mathcal O((r+d + n) \, n^2)$ (exact)
or $\mathcal O((r+d+\log n) \, n^2)$ (approximated). 
In the experiments, we perform exact inner OT calculations with $r=n$.

\paragraph{SFTLB}
Here,
we initially determine $\xi^\Q_\alpha$ and $\upsilon^\Q_\alpha$.
Next,
we employ a Monte Carlo scheme to calculate \eqref{eq:sftlb_def} by
\begin{equation}
\label{eq:monte_carlo_sw}
    \SFTLB^2_{\alpha, 2}(\XXS, \YYS)
    \approx 
    \frac1L \sum_{\ell=1}^L
     \W^2_2(\pi_{\theta_\ell,\sharp} \, \xi_\alpha^\Q, \pi_{\theta_\ell,\sharp} \, \upsilon^\Q_\alpha)
\end{equation}
with uniformly distributed directions $\theta_\ell \in \Sph^{r+d-1}$.
The 1D Wasserstein distance can be computed by sorting,
yielding an all-in-all complexity of $\mathcal O(Ln(r+d) + (Ln + n^2) \log n)$, where we aim for a small $r$.
Here, the Monte Carlo scheme converges in $\mathcal{O}(\sqrt{L})$ \cite{nadjahi2020statistical}. 

\medskip

The computational complexity of FTLB and SFTLB is dominated by the preliminary row-wise sorting
and the employed OT solvers.
To study the actual run-time speed-up, 
we generate 5 pairs of $n\times n$ distance matrices with $n\in \{100, 1000, 10\,000\}$ and 1D normal features. 
Using the POT toolbox \cite{flamary2021pot} for the OT computations, 
we then determine the corresponding FTLB, SFTLB, and FGW distance.
The runtimes are recorded in Table~\ref{tab:time_tab},
where the significant speed-up due to the slicing is clearly visible.

\begin{table}
    \centering
  \fontsize{8}{7.5}\selectfont
    \begin{tabular}{lccc}
    \toprule
    $n$ & 100 & 1000  & 10\,000 \\
    \midrule
    FGW & 0.03 $\pm$ 0.02 & 0.45 $\pm$ 0.07 & 178.29 $\pm$ 5.22\\
    FTLB & 0.06 $\pm$ 0.03 & 0.10 $\pm$ 0.02 & 15.69 $\pm$ 0.43\\
    SFTLB & 0.04 $\pm$ 0.04 & 0.03 $\pm$ 0.01 & 1.18 $\pm$ 0.00 \\
    \bottomrule
    \end{tabular}
    \caption{Average computation time in seconds for 5 instances with graph sizes 100, 1000, and 10\,000.
    The remaining parameters have been chosen as $r=10$ and $L=50$.}
    \label{tab:time_tab}
\end{table}

\subsection{FTLB Transport Plans}

\begin{figure}[t] 
    \centering
    \begin{subfigure}[b]{0.495\linewidth} 
        \centering 
        \includegraphics[width=\linewidth]{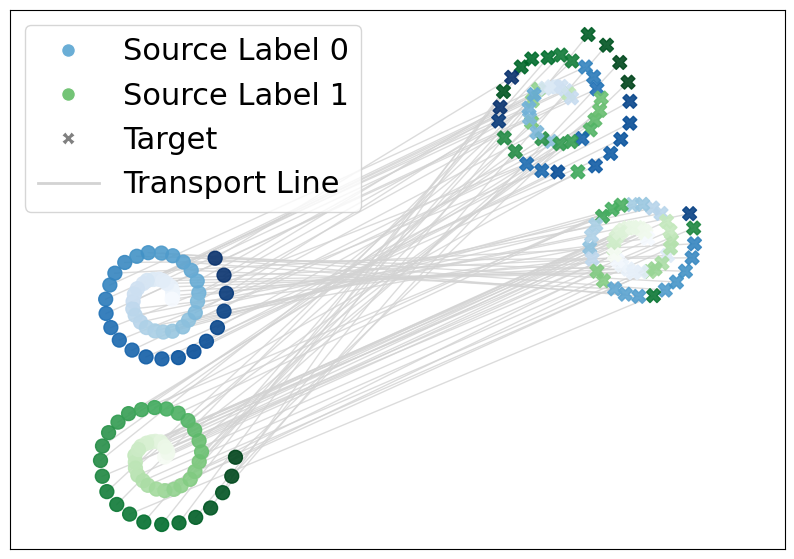}
        \caption{TLB}
        \label{fig:custom_plan_0.0}
    \end{subfigure}
    \hfill
    \begin{subfigure}[b]{0.495\linewidth} 
        \centering
        \includegraphics[width=\linewidth]{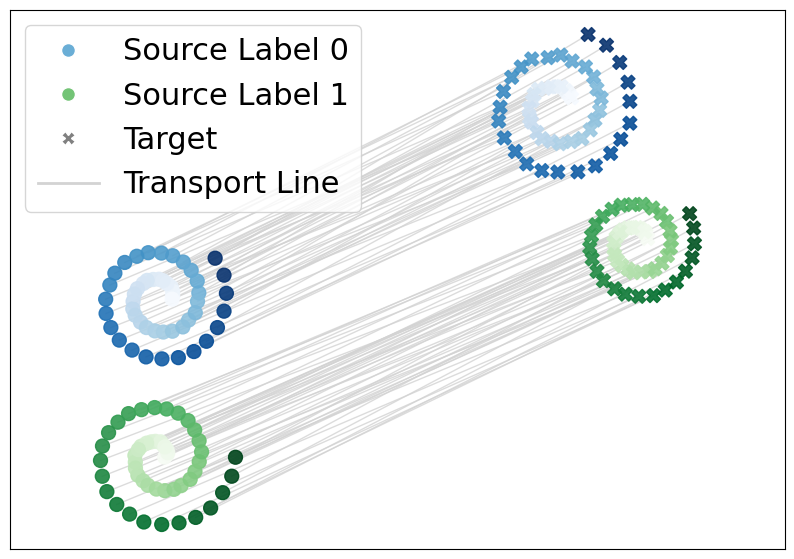}
        \caption{FTLB ($\alpha=0.9$)}
        \label{fig:custom_plan_0.9}
    \end{subfigure}
    \caption{Comparison of TLB and FTLB transport plans.
    Crosses are colored based on the transported mass.
    FTLB achieves a more regular transport plan by integrating labels.} 
    \label{fig:TLB_VS_FLTB_plan}
\end{figure}

Although the GW distance is well fitted 
to detect similarities between two single structures,
it usually fails to detect these in a set of independent structures.
The FGW distance can overcome this issue 
by assigning appropriate labels.
This observation carries over
to the outer transport plans of TLB and FTLB,
which is illustrated in Figure~\ref{fig:TLB_VS_FLTB_plan},
where the left two spirals are transported to the right two.
Source and target are here modelled as point clouds in $\R^2$
equipped with the Euclidean distance and binary feature labels.

\begin{figure}[t]
    \centering
    \includegraphics[width=.9\linewidth, trim={0 35. 0 6.}, clip]{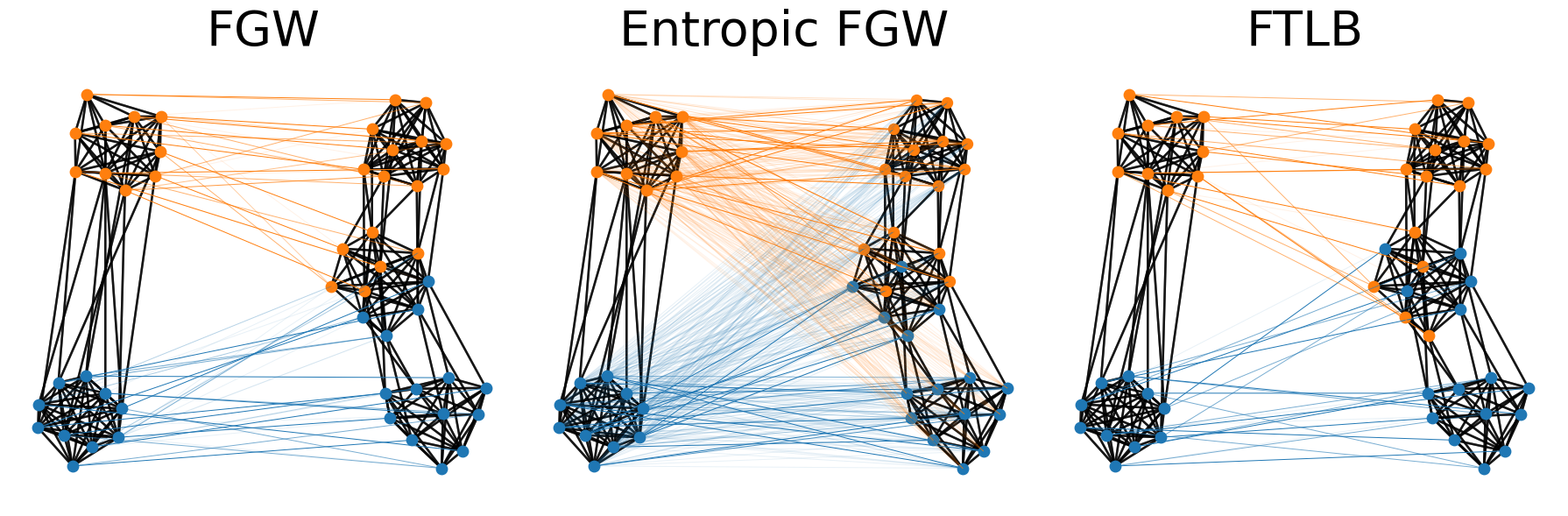}
    \caption{Comparison of FGW, entropic FGW ($\epsilon=0.01$), FTLP transport plan for $\alpha=0.5$.
    Target nodes are colored according to the transported mass.
    FGW and FTLB capture the same structural similarities.}
    \label{fig:GW_VS_FTLB_plan}
\end{figure}

To compare our FTLB transport with FGW and its entropic version,
we adapt an example from \cite{flamary2021pot};
see Figure~\ref{fig:GW_VS_FTLB_plan}. 
Each image displays the source graph on the left and the target graph on the right. 
The transports are visualized using the color of the community assignment in the source.
In this example,
the FTLB plan captures the main structures similarly to the original FGW plan,
indicating that our FTLB transport may be an efficently-to-compute alternative in practice.

\subsection{Free-Support Euclidean Barycenters}

\begin{figure}[t]
  \centering
    \begin{subfigure}[t]{\linewidth}
  \begin{subfigure}[b]{.4\textwidth}
    \includegraphics[width=\textwidth]{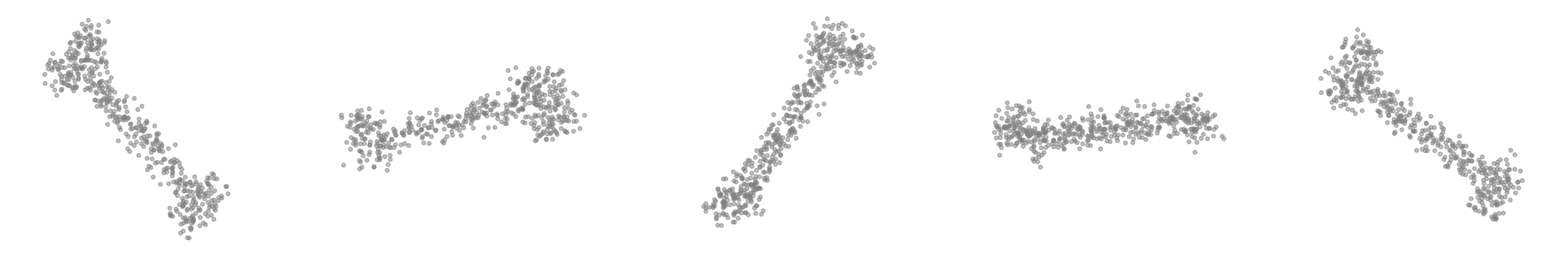}
    \caption*{Bone Inputs}
  \end{subfigure}
  \hfill
  \begin{subfigure}[b]{.15\textwidth}
    \includegraphics[width=\textwidth]{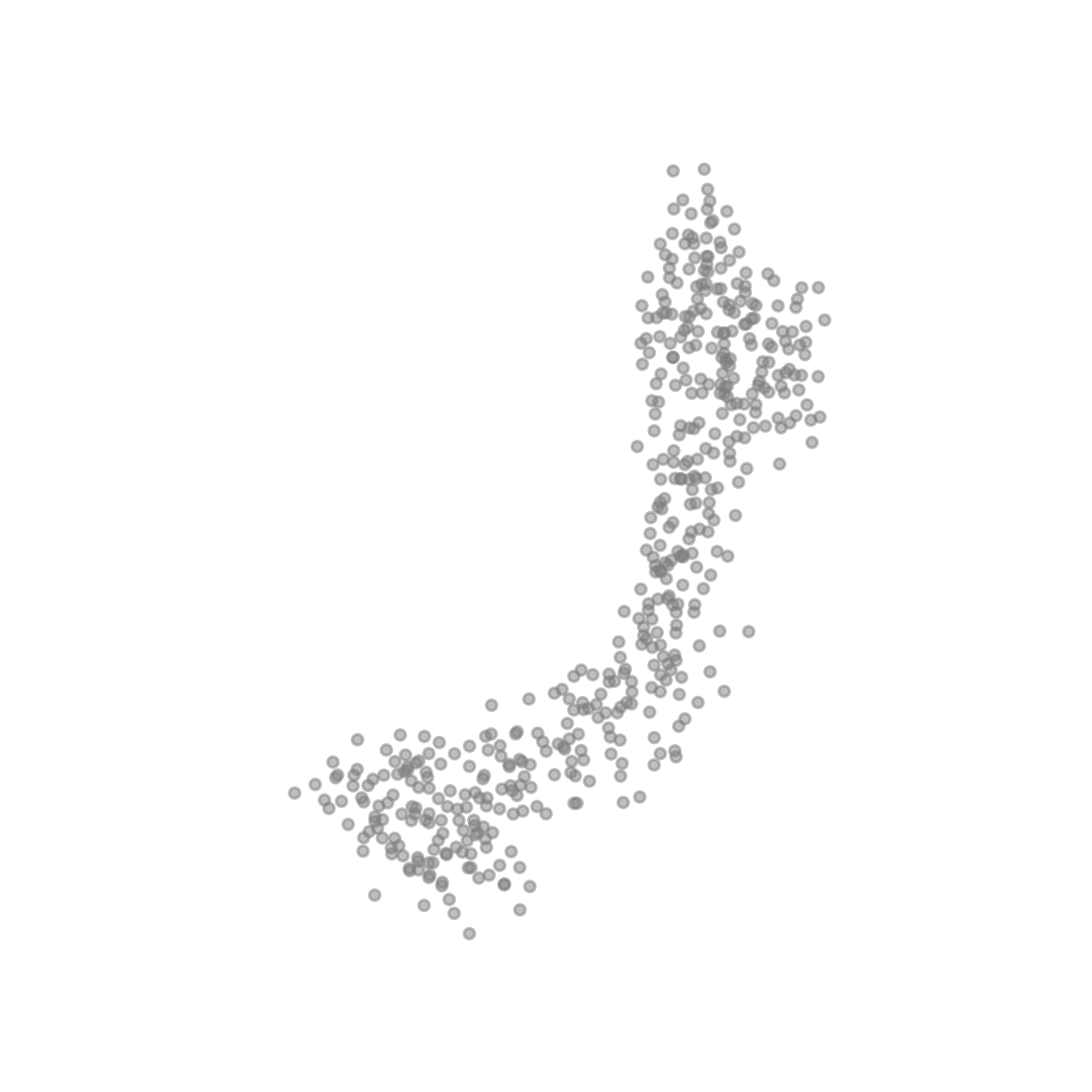}
    \caption*{TLB}
  \end{subfigure}
  \hfill
  \begin{subfigure}[b]{.15\textwidth}
    \includegraphics[width=\textwidth]{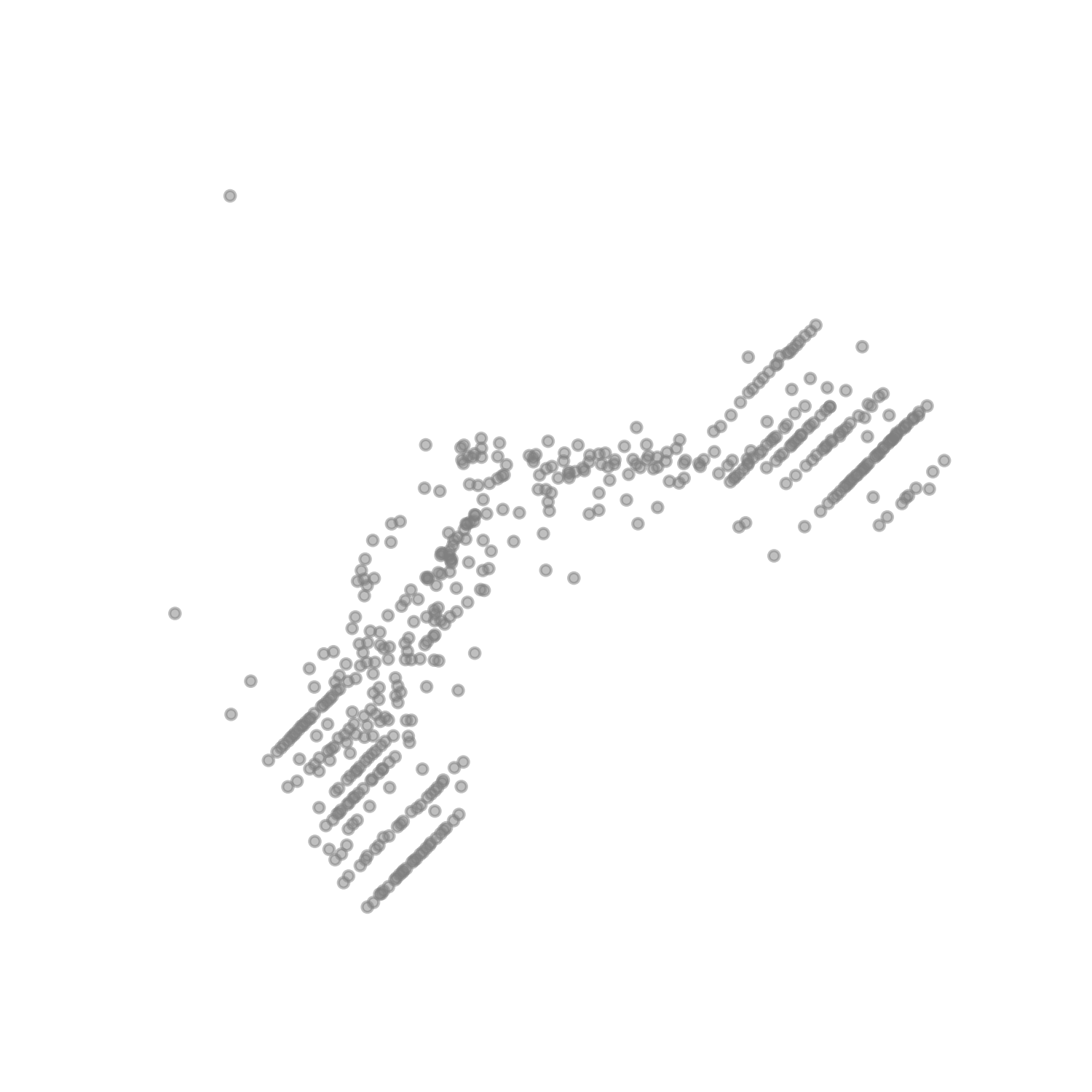}
    \caption*{AE}
  \end{subfigure}
  \hfill
  \begin{subfigure}[b]{.15\textwidth}
    \includegraphics[width=\textwidth]{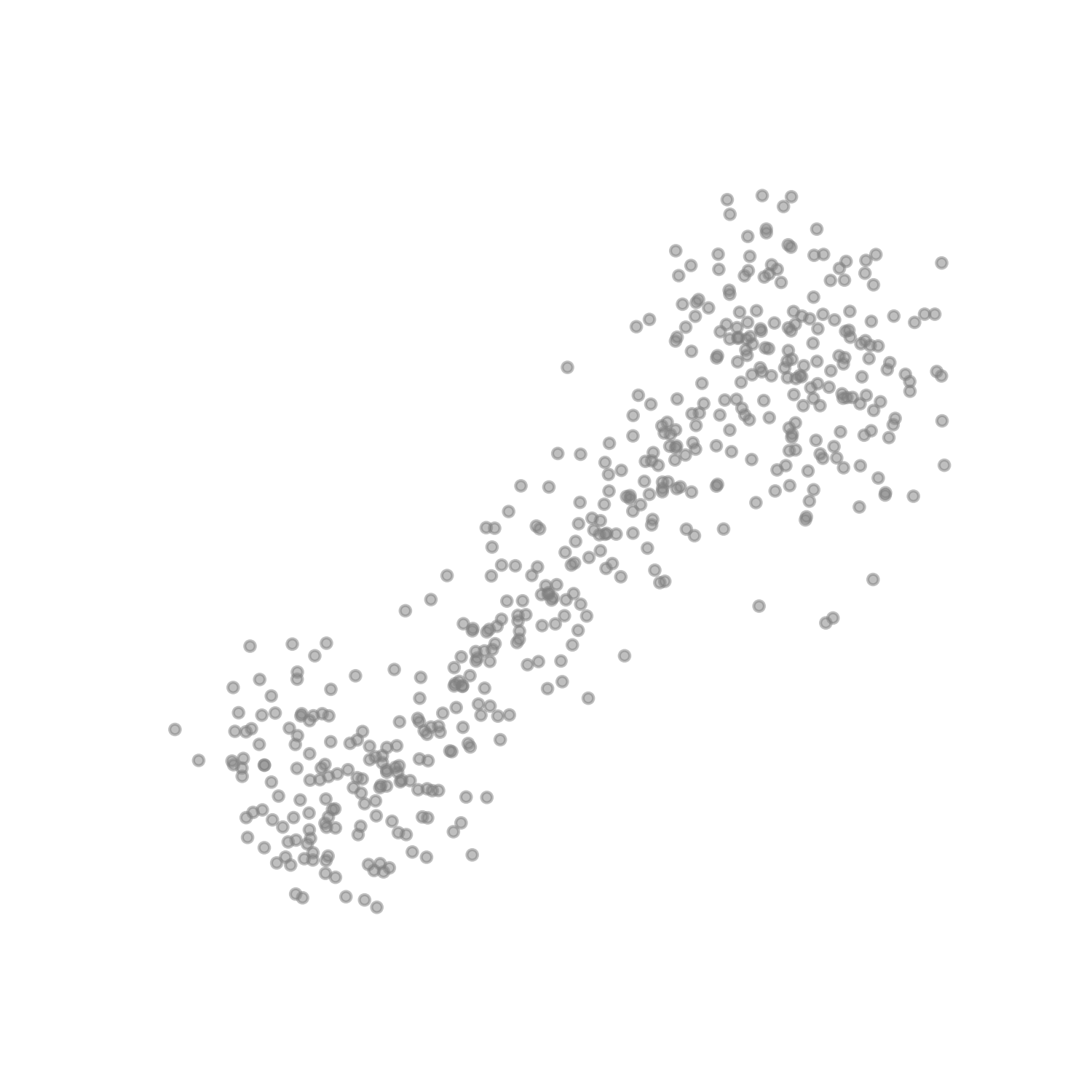}
    \caption*{STLB}
  \end{subfigure}
  \end{subfigure}
  \hfill
    \begin{subfigure}[t]{\linewidth}
  \begin{subfigure}[b]{.4\textwidth}
    \includegraphics[width=\textwidth]{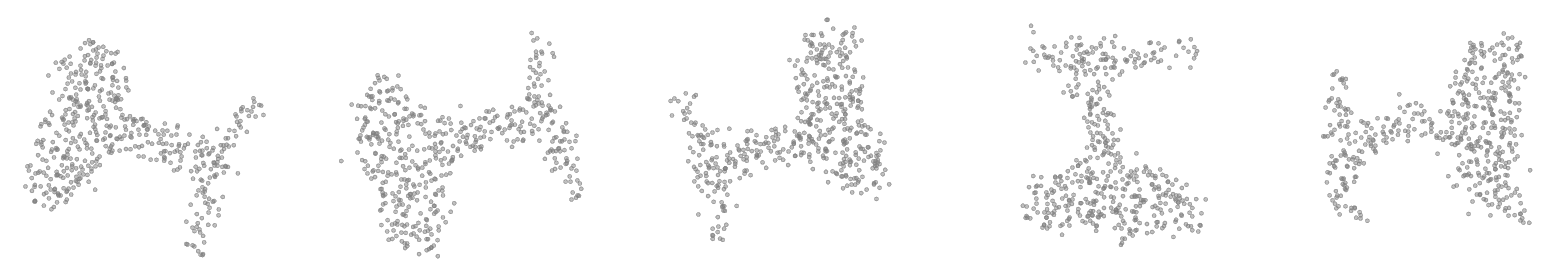}
    \caption*{Goblet Inputs}
  \end{subfigure}
  \hfill
  \begin{subfigure}[b]{.15\textwidth}
    \includegraphics[width=\textwidth]{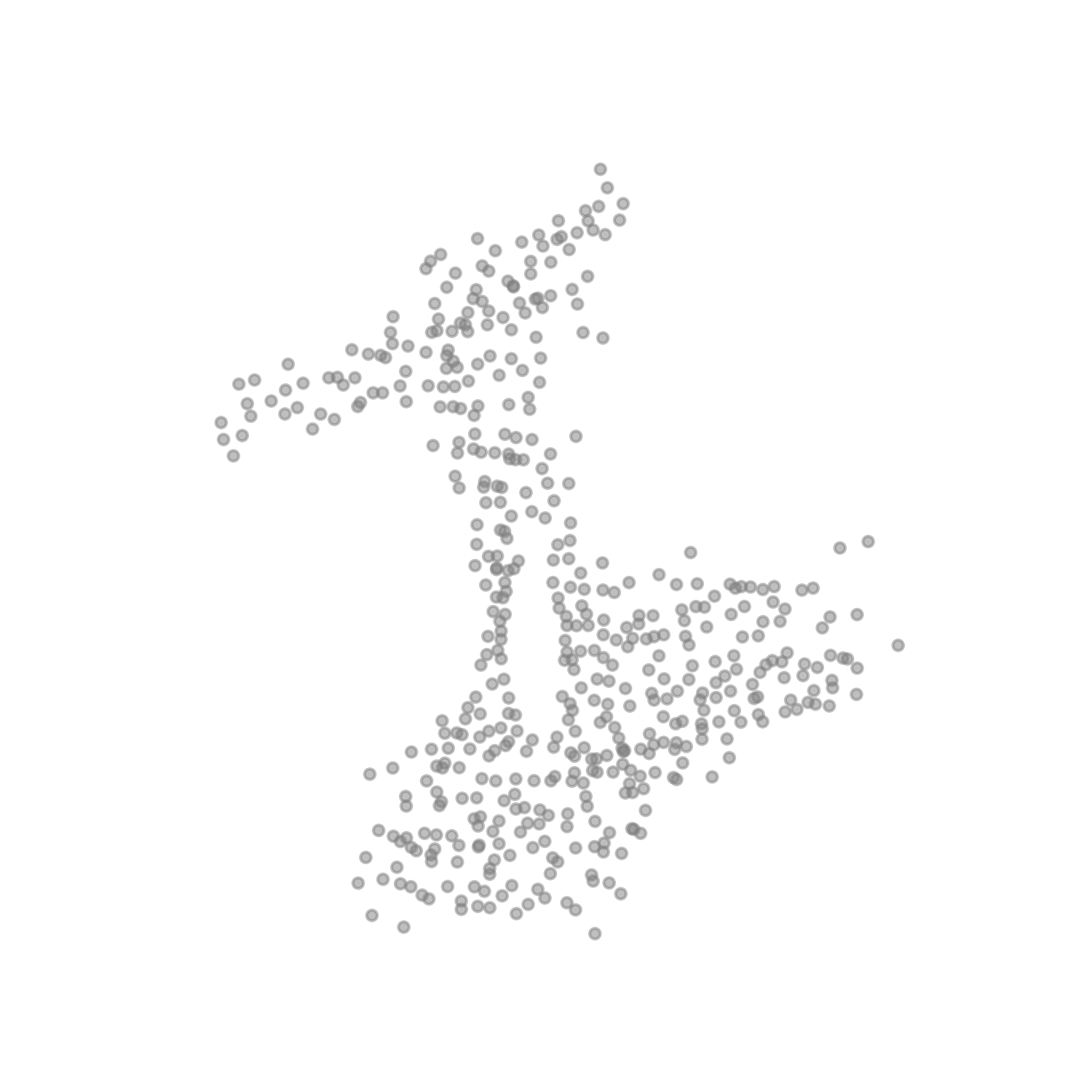}
    \caption*{TLB}
  \end{subfigure}
  \hfill
  \begin{subfigure}[b]{.15\textwidth}
    \includegraphics[width=\textwidth]{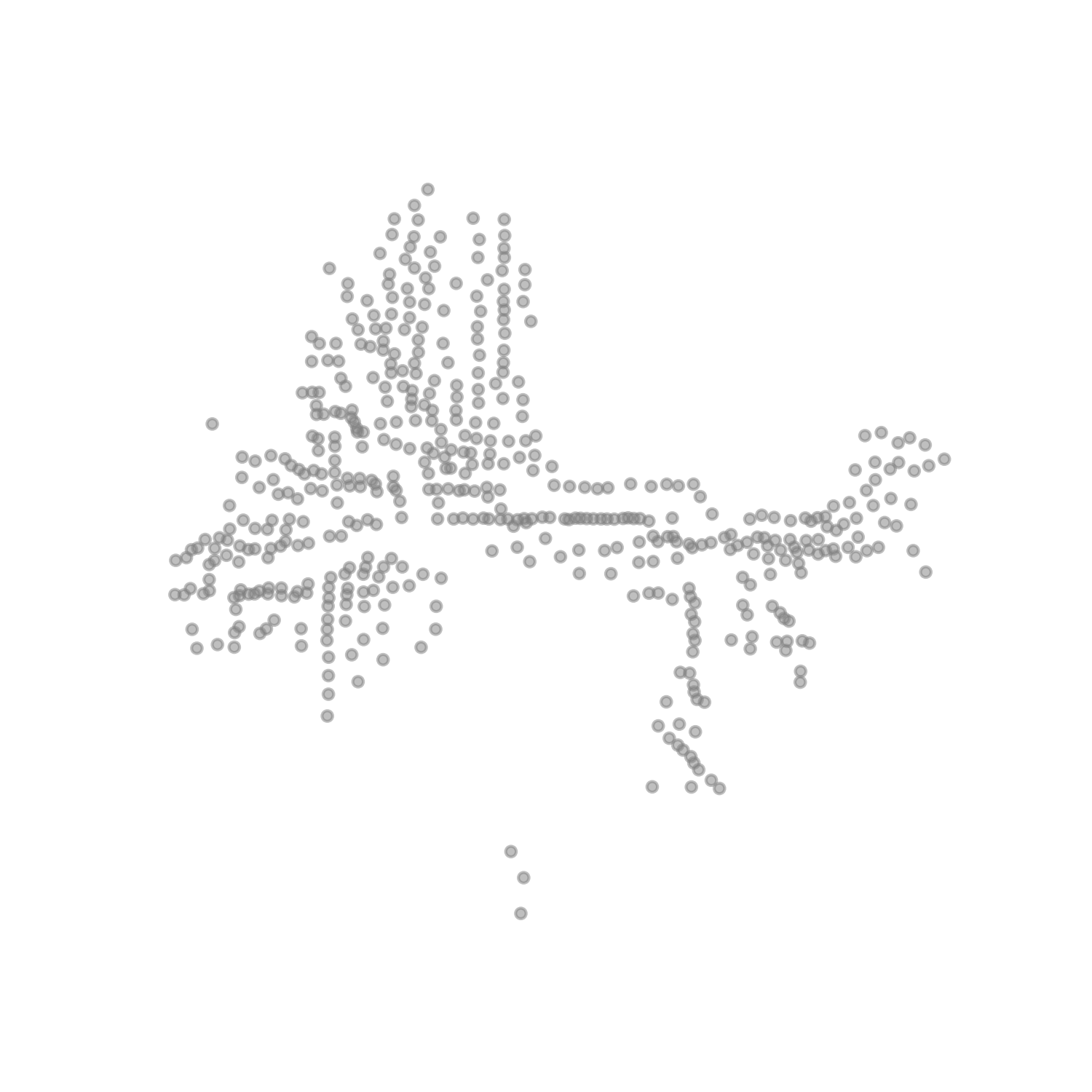}
    \caption*{AE}
  \end{subfigure}
  \hfill
  \begin{subfigure}[b]{.15\textwidth}
    \includegraphics[width=\textwidth]{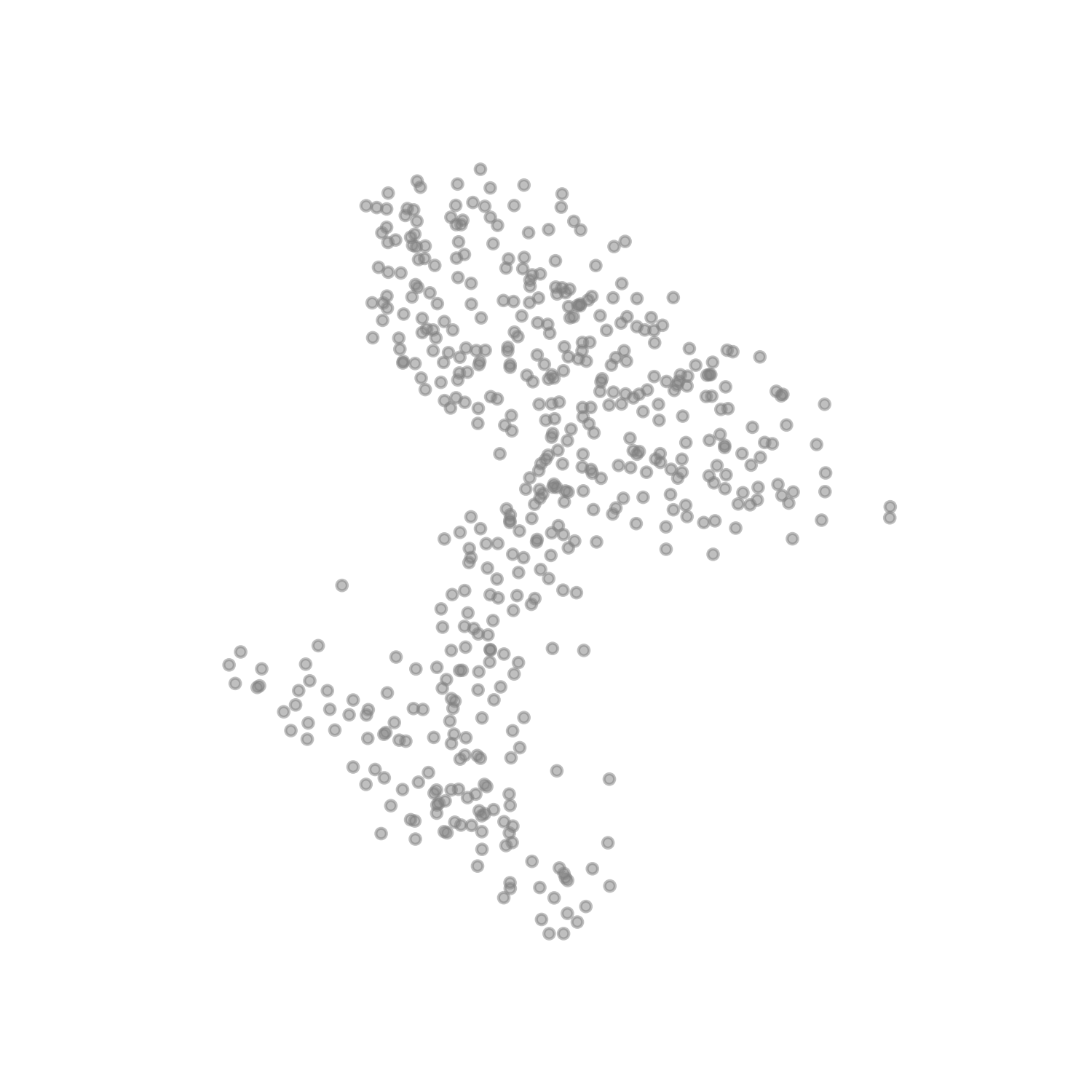}
    \caption*{STLB}
  \end{subfigure}
  \end{subfigure}
  \hfill
    \begin{subfigure}[t]{\linewidth}
  \begin{subfigure}[b]{.4\textwidth}
    \includegraphics[width=\textwidth]{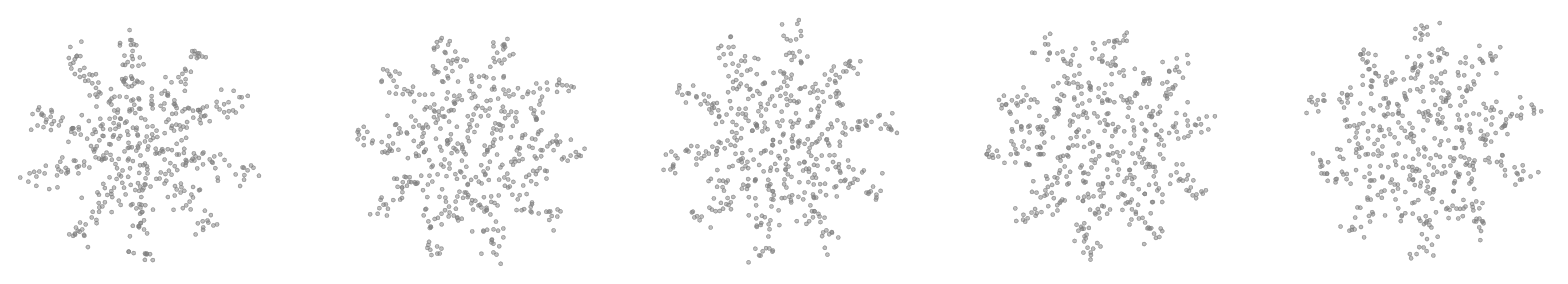}
    \caption*{Star Inputs}
  \end{subfigure}
  \hfill
  \begin{subfigure}[b]{.15\textwidth}
    \includegraphics[width=\textwidth]{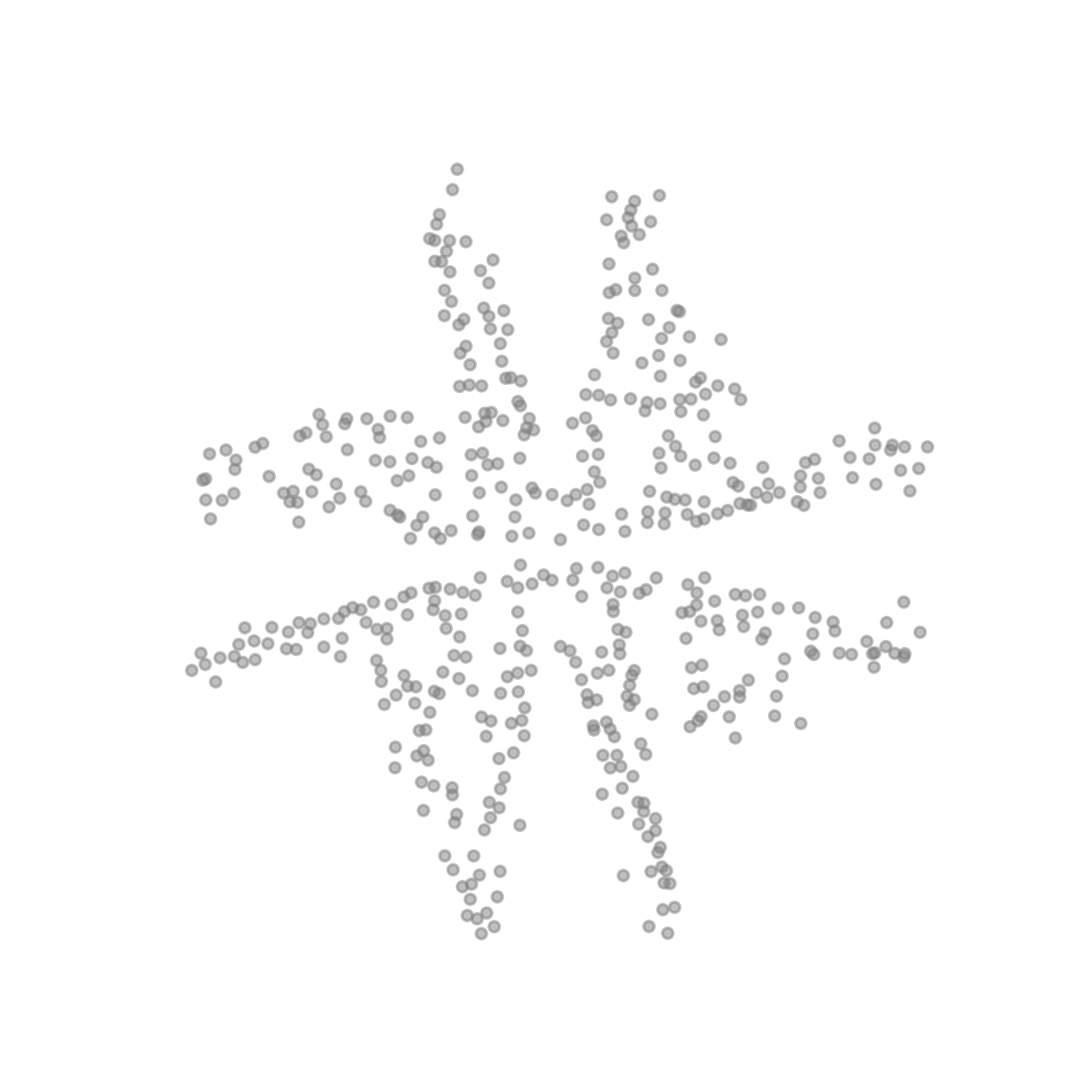}
    \caption*{TLB}
  \end{subfigure}
  \hfill
  \begin{subfigure}[b]{.15\textwidth}
    \includegraphics[width=\textwidth]{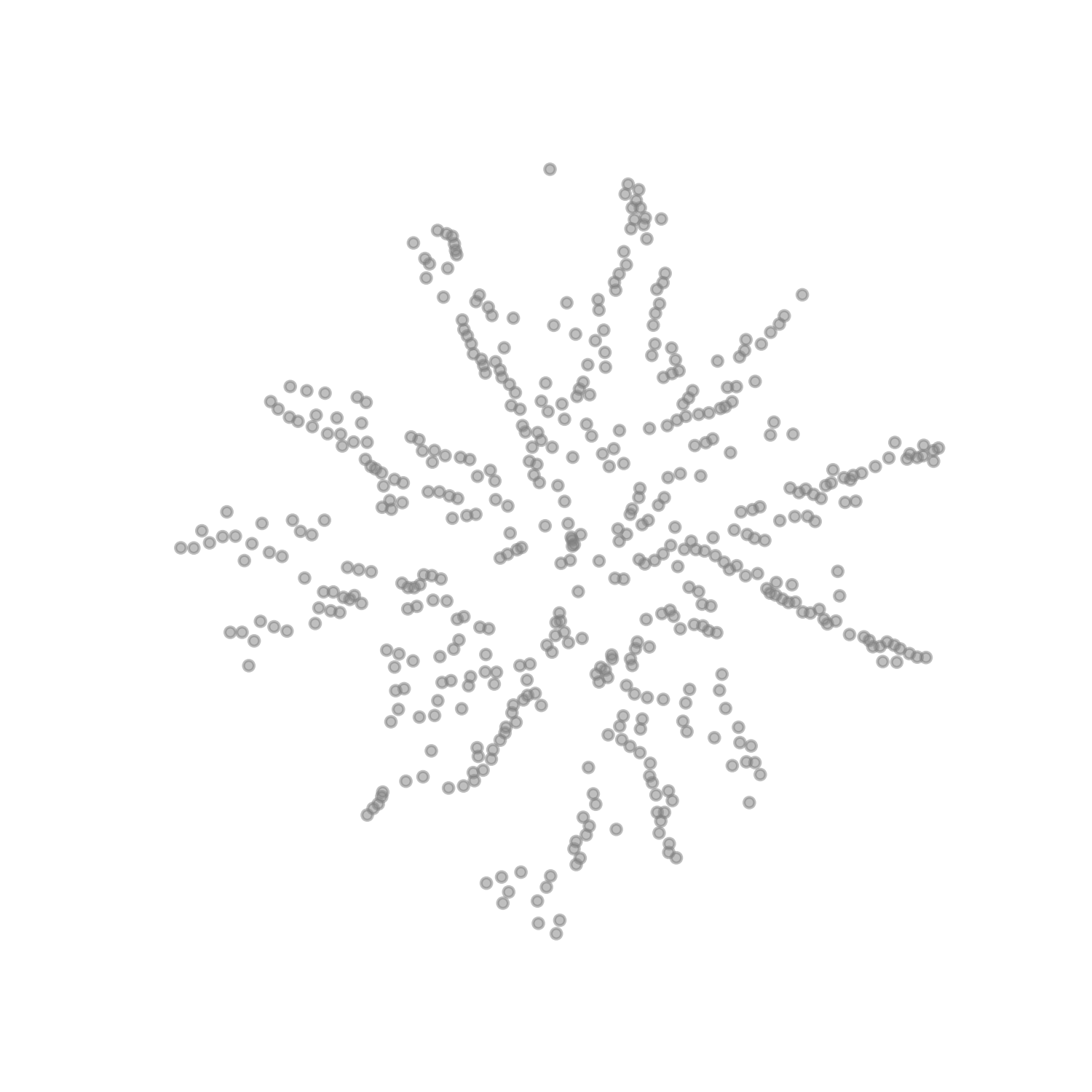}
    \caption*{AE}
  \end{subfigure}
  \hfill
  \begin{subfigure}[b]{.15\textwidth}
    \includegraphics[width=\textwidth]{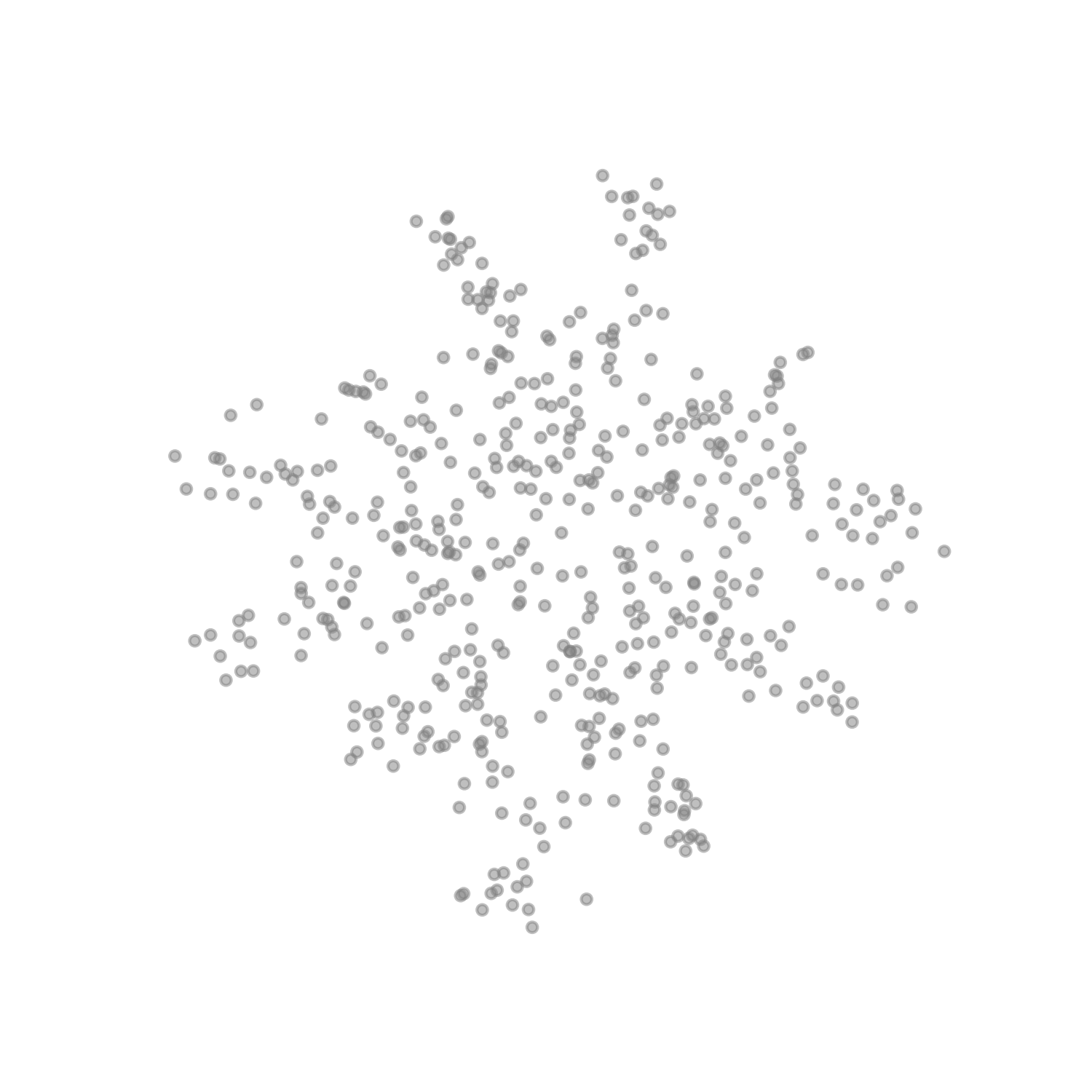}
    \caption*{STLB}
  \end{subfigure}
  \end{subfigure}
  \hfill
  \caption{Euclidean 2D TLB\revise{, AE} and STLB barycenters for five samples of different shape classes.}
  \label{fig:2dbary}
\end{figure}

GW barycenters are generalized Fréchet means 
that interpolate several mm-spaces $\YY_k$, $k=1,\dots,K$
by solving
\begin{equation}
    \argmin
    \Bigl\{
    \sum_{k=1}^K \GW_2^2(\XX, \YY_k)
    \Bigm\vert
    \XX \; \text{mm-space}
    \Bigr\}.
    \label{eq:GW-bary}
\end{equation}
The solutions of the GW barycenter problem can be completely characterized
using multi-marginal OT
\cite{beier2023multi,floriantangential};
its numerical computation, however, remains challenging.
Established free-support methods \cite{PeyreCuturi2019} 
minimize over the distance of $\XX$
producing non-Euclidean outputs;
whereas 
fixed-support methods \cite{beier2023multi, beier2025joint}
limit the resolution of the barycenter.
As an alternative, 
we replace GW in \eqref{eq:GW-bary} by TLB, STLB ($r=500, L=500)$ and the so-called Anchor Energy (AE) distance \cite{sato2020fast} for comparison.
For each distance, we perform a gradient descent over the points $x_i \in \R^d$ in
$\XX = (\{x_i\}_{i=1}^{n}, d_{\mathrm{E}}, \iota\}$, where
$d_{\mathrm E}$ is the Euclidean metric
and $\iota$ the uniform measure.
Figure \ref{fig:2dbary} shows computed 2D barycenters
for different upsampled shapes ($n=500$) from \cite{beier2022linear},
obtained via three identical random restart initializations, and 1000 steps (width: 0.1).

\subsection{Classification of 2D and 3D Shapes}
\label{subsec:2shapes}

\sisetup{
  round-mode      = places,
  round-precision = 1
}
\begin{table*}[ht]
  \centering
  \fontsize{8}{7.5}\selectfont
  \setlength{\tabcolsep}{4pt}  
  \renewcommand{\arraystretch}{1.1}
  \begin{tabular}{ll
                  c c
                  c c
                  c c
                  c c}
    \toprule
    distance & type
      & \multicolumn{2}{c}{2D shapes}
      & \multicolumn{2}{c}{animals}
      & \multicolumn{2}{c}{FAUST-500}
      & \multicolumn{2}{c}{FAUST-1000} \\
    \cmidrule(lr){3-4} \cmidrule(lr){5-6}
    \cmidrule(lr){7-8} \cmidrule(lr){9-10}
      & 
      & acc. (\%) & time (ms)
      & acc. (\%) & time (ms)
      & acc. (\%) & time (ms)
      & acc. (\%) & time (ms) 
      \\
    \midrule
    SLB        & global
      & 97.9$\pm$2.2  & 0.6
      & 100.0$\pm$0.1 & 0.7
      & 31.2$\pm$5.4  & 48.3
      & 35.2 $\pm$ 5.7          & 183.7       \\
      \hline
    TLB        & local
      & 100.0$\pm$0.3 & 0.7
      & 100.0$\pm$0.0 & 0.8
      & 36.8$\pm$5.4  & 24.8
      & 40.0 $\pm$ 6.0          & 88.3       \\
    STLB & local
      & 99.3$\pm$1.2  & 1.3
      & 99.2$\pm$1.1  & 1.5
      & 35.8$\pm$5.5  & 11.6
      & 40.1$\pm$5.7  & 23.1       \\
    AE & local
      & 99.7$\pm$0.9  & 0.6
      & 97.8$\pm$1.8  & 0.7
      & 37.8$\pm$5.6  & 8.1
      & 41.6$\pm$5.7          & 35.8       \\
      \hline
    GW         & iso
      & 99.7$\pm$0.6  & 2.6
      & 100.0$\pm$0.0 & 5.8
      & 29.2$\pm$4.3  & 482.0
      & 33.3$\pm$5.2          & 2398.3       \\
    \hline
    \revise{LGW}         & approx
      & 91.2$\pm$2.7  & 288.9
      & 89.8$\pm$3.9 & 481.0
      & 29.4$\pm$4.7  & $ 7 \cdot 10^4$
      & 30.2$\pm$5.1          & $  10^5$       \\
    \revise{qGW}         & approx
      & 61.6$\pm$4.4  & 16.4
      & 48.8$\pm$6.3 & 14.7
      & 13.7$\pm$3.6  & 18.2
      & 12.4$\pm$3.2  & 35.3       \\
    \bottomrule
  \end{tabular}
    \caption{KNN Shape classification: 
    Mean accuracy (acc.) with standard deviation
    and mean runtime (time).}
  \label{tab:knn_shapes}
\end{table*}
\begin{table*}[t]
  \centering
  \fontsize{8}{7.5}\selectfont
  \setlength{\tabcolsep}{4pt}
  \renewcommand{\arraystretch}{1.1}
  \begin{tabular}{l
    *{6}{c}
  }
    \toprule
    graph model
      & \multicolumn{1}{c}{WS}
      & \multicolumn{1}{c}{BA}
      & \multicolumn{1}{c}{RR}
      & \multicolumn{1}{c}{WS}
      & \multicolumn{1}{c}{BA}
      & \multicolumn{1}{c}{RR} \\
      nodes
      & 10 & 10 & 10 & 500 & 500 & 100
      \\
      parameters
    & \multicolumn{1}{c}{{$k{=}4$, $p_E{=}0.1$}}
      & \multicolumn{1}{c}{{$m{=}5$, $p_B{=}0.5$}}
      & \multicolumn{1}{c}{{$R=3$, $p_B=0.5$}}
    & \multicolumn{1}{c}{{$k{=}250$, $p_E{=}0.1$}}
      & \multicolumn{1}{c}{{$m{=}250$, $p_B{=}0.1$}}
      & \multicolumn{1}{c}{{$R=3$, $p_B=0.5$}}\\
    \midrule
    SLB   
      & 90.40 $\pm$ 2.62
      & 61.50 $\pm$ 3.44
      & 96.30 $\pm$ 1.62
      & 50.00 $\pm$ 0.00
      & 50.00 $\pm$ 0.00
      & 100.00 $\pm$ 0.00 \\
    TLB   
      & 96.80 $\pm$ 2.99
      & 99.50 $\pm$ 1.02
      & 99.40 $\pm$ 1.80
      & 100.00 $\pm$ 0.00
      & 100.00 $\pm$ 0.00
      & 100.00 $\pm$ 0.00 \\
    \textbf{STLB}
      & 99.00 $\pm$ 1.00
      & 99.80 $\pm$ 0.40
      & 99.80 $\pm$ 0.60
      & 99.60 $\pm$ 0.80
      & 99.40 $\pm$ 0.92
      & 100.00 $\pm$ 0.00 \\
    GW    
      & 67.40 $\pm$ 3.83
      & 73.30 $\pm$ 3.85
      & 65.50 $\pm$ 4.03
      & 65.00 $\pm$ 5.00
      & 100.00 $\pm$ 0.00
      & 100.00 $\pm$ 0.00 \\
    WL-D   
      & 100.00 $\pm$ 0.00
      & 100.00 $\pm$ 0.00
      & 50.00 $\pm$ 0.00
      & 100.00 $\pm$ 0.00
      & 100.00 $\pm$ 0.00
      & 50.00 $\pm$ 0.00 \\
    \hline
    \textbf{FTLB}
      & 100.00 $\pm$ 0.00
      & 96.00 $\pm$ 1.18
      & 97.60 $\pm$ 1.50
      & 100.00 $\pm$ 0.00
      & 94.80 $\pm$ 1.60
      & 98.60 $\pm$ 0.92 \\
    \textbf{SFTLB}
      & 100.00 $\pm$ 0.00
      & 98.20 $\pm$ 1.08
      & 99.40 $\pm$ 0.92
      & 100.00 $\pm$ 0.00
      & 100.00 $\pm$ 0.00
      & 99.80 $\pm$ 0.60 \\
    FGW   
      & 98.20 $\pm$ 1.66
      & 90.30 $\pm$ 1.79
      & 74.60 $\pm$ 2.69
      & 100.00 $\pm$ 0.00
      & 98.80 $\pm$ 1.33
      & 69.80 $\pm$ 4.42 \\
    WL-F   
      & 100.00 $\pm$ 0.00
      & 83.10 $\pm$ 3.21
      & 81.30 $\pm$ 3.85
      & 100.00 $\pm$ 0.00
      & 99.20 $\pm$ 1.33
      & 86.80 $\pm$ 3.92 \\
    \bottomrule
  \end{tabular}
  \caption{Mean accuracy (\%) with standard deviation for graph isomorphism testing. 
  The first five methods are label-agnostic, whereas the last four methods employ node features.
  Proposed methods are printed in bold.}
  \label{tab:iso_test_eval}
\end{table*}

A key application of the GW distance is the comparison of shapes,
which can be represented as Euclidean point clouds 
or as meshes equipped with geodesic distances.
Building on experiments from \cite{memoli2011gromov,beier2022linear}, 
we rely on the k-nearest neighbor (KNN) classification \cite{cover1967nearest}. 
More precisely,
we first precompute pairwise distance matrices 
with respect to SLB, TLB, STLB \revise{($r=10, L=100$)}, AE \cite{sato2020fast}, \revise{qGW \cite{chowdhury2019gromov}, LGW \cite{beier2022linear}
} and GW.
Second, we evaluate the classification accuracy 
by assigning each test point to the majority class 
among its three nearest neighbors. 
We report the mean accuracy 
over 1000 random 25\%/75\% training/test splits 
on the following datasets:
\begin{description}
    \item[2D Shapes:] 80 shapes from 4 classes (bone, goblet, star, horseshoe)
        represented as Euclidean point clouds, each consisting of 50 points \cite{beier2022linear}; 
    \item[Animals:] 73 human/animal shapes from 7 classes \cite{sumner2004deftransferdata_animals}
        downsampled to 3D meshes with 50 vertices, equipped with the geodesic distance
        \cite{memoli2011gromov,beier2022linear};
    \item[FAUST-500 and FAUST-1000:] 100 human meshes from 10 subjects \cite{Bogo:CVPR:2014:FAUST},
        downsampled and represented by 3D meshes with 500 and 1000 vertices
        equipped with the geodesic distance
        using PyVista \cite{sullivan2019pyvista} and NetworkX \cite{hagberg2008exploring}.
\end{description}
Table~\ref{tab:knn_shapes} summarizes classification accuracy, runtime, and nullspace types \cite{memoli2022distance}:
SLB vanishes for globally matching distance distributions (`global'), 
TLB/STLB/AE for locally matching distributions (`local'),
and GW for isomorphic shapes (`iso'). \revise{Lastly, LGW and qGW approximate GW (`approx')\footnote{\revise{LGW/qGW depend heavily on hyperparameter choices balancing speed and performance, see our appendix for a discussion.}}.}
While most methods perform near-perfectly on `2D shapes' and `animals', 
accuracy drops on FAUST datasets. 
On these, 
`local' methods (TLB, STLB, AE) consistently outperform \revise{other approaches}.
Although GW should yield perfect classifications in theory, 
the underlying non-convex quadratic minimization reduces the numerical accuracy.
STLB shows a clear runtime advantage for large-scale datasets.

\subsection{Detection of Graph Isomorphisms}

An important theoretical property of the GW and FGW distance 
is that they vanish only for two isomorphic inputs 
\cite{memoli2011gromov, vayer2020fused},
which facilitates an application in graph isomorphism testing. 
In practice,
the established GW and FGW solvers are however not accurate enough
to give reliable results of this NP-hard problem
\cite{grohe2020graph}. 
Therefore,
SLB has been proposed as a statistical test for graph isomorphisms 
\cite{weitkamp2022gromov, weitkamp2024distribution}.
However,
SLB has less statistical power than TLB \cite{memoli2022distance};
for this reason,
we propose to use STLB and SFTLB instead.
During the experiment,
we rely on synthetic datasets of random graphs 
with fixed node numbers,
where half of the graph pairs are isomorphic 
and half of the graph pairs are not.
Here, we consider these graph generation models:
\begin{description}
    \item[Watts--Strogatz (WS) Graph:]
        The WS graph is governed
        by a neighborhood number $k$ 
        and an edge rewiring probability $p_E$
        \cite{watts1998collective}.
        We add node features according to a 1D standard normal distribution.
    \item[Barabasi--Albert (BA) Graph:]
        The BA graph is governed by an edge generation number $m$ \cite{barabasi1999emergence}.
        We assign node features in $\{0,1\}$ based on a Bernoulli distribution 
        with probability $p_B$.
    \item [Random Regular (RR) Graph:] 
        The RR graph is based on a regularity $R$ \cite{bollobas1998random}.
        We assign Bernoulli node features in $\{0,1\}$
        according to the probability $p_B$.
\end{description}
We compare the generated, unlabeled graph pairs using
SLB, TLB, STLB, GW, and Weisfeiler-Lehman (WL) kernel test \cite{shervashidze2011wlkernel}.
For the WL test, 
we use 5 iterations and set the labels to the node degree (WL-D).
The labeled graph pairs are compared using $\alpha=0.5$ for
FTLB, 
SFTLB ($r=5$, $L=100$), 
and entropic FGW ($\varepsilon=10^{-3}$), 
as well as with WL
using 5 iterations 
and (binned) node features (WL-F).
We classify two graphs as isomorphic if they are among the 50\% most similar graph pairs in terms of the different distances or if the distance becomes exactly zero.
In Table~\ref{tab:iso_test_eval}, we display the results of 10 experiment repetitions over 6 different random graph configurations.

\section{Conclusion}
\revise{We address the computational limits of GW and FGW by introducing new lower bounds. Extending TLB to the fused setting and linking it to a Wasserstein problem, we derive an efficient sliced FGW via numerical quadrature of 1D OTs. Our work bridges TLB and SW, clarifies SW’s dimensional behavior, and proves effective for shape and graph comparisons, thereby overcoming the limitations of sliced GW \cite{vayer2019sgw}.}
In future research, the calculation of  local distance distribution required for AE, FTLB, and SFTLB may be further accelerated via sweep-line algorithms \cite{sato2020fast}, which is not studied in this work.

\section*{Acknowledgments}
M.P. acknowledges funding from the German Research Foundation (DFG) within the project BIOQIC (GRK2260/289347353). 
\bibliography{references}

\newpage
\onecolumn
\appendix
\section{Technical Appendix}

\subsection{Detailed Proofs of the Propositions}

 \begin{proof}[Proof of Prop.~\ref{prop:ftlb_prop}]
    (i)
    Lebesgue's dominated convergence theorem implies
    $g(x_n, \cdot)_{\sharp} \,\xi \to g(x, \cdot)_\sharp \, \xi$ weakly
    if $x_n \to x$ in $X$
    and
    $h(y_n, \cdot)_{\sharp} \,\upsilon \to h(y, \cdot)_\sharp \, \upsilon$ weakly
    if $y_n \to y$ in $Y$.
    Thus, the local distance distribution $\LD_{p}$ is continuous on $X \times Y$.
    Since the Euclidean distance is continuous as well,
    the infimum in \eqref{eq:FTLB} is attained
    \cite[Thm.~1.4]{Santambrogio2015}.
    
    (ii) Similarly to the relation between GW and TLB
    \cite[Prop.~6.3]{memoli2011gromov},
    for any $\gamma \in \Gamma(\pi_{X, \sharp} \, \xi, \pi_{Y, \sharp} \, \upsilon)$,
    we obtain
    \begin{align}
    \D_{p}(\gamma)
    &=
    \iint_{(X\times Y)^2}
    \lvert g(x,x') - h(y,y') \rvert^p
    \d\gamma(x', y') \d\gamma(x,y)
    \\
    &\geq
    \int_{X\times Y}
    \Biggl[
    \inf_{\gamma' \in \Gamma(\pi_{X,\sharp} \, \xi, \pi_{Y, \sharp} \, \upsilon)}
    \int_{X\times Y}
    \lvert g(x,x') - h(y,y') \rvert^p
    \d\gamma'(x', y') 
    \Biggr]\d\gamma(x,y)
    \\
    &
    \int_{X\times Y}
    \LD_p(x,y)
    \d\gamma(x,y)
    = 
    \tilde\D_p(\gamma).
    \end{align}
    
    (iii) 
    From the definition of FTLB, 
    part (ii),
    and the interpolation properties of FGW 
    \cite[Thm.~3]{vayer2020fused},
    we obtain
    \begin{equation}
        \alpha
        \W_p^p(\pi_{Z, \sharp} \, \xi, \pi_{Z', \sharp} \, \upsilon) 
        \leq
        \FTLB^p_{\alpha, p}(\XXS,\YYS) 
        \leq 
        \FGW^p_{\alpha, p}(\XXS,\YYS)
        \xlongrightarrow[\alpha \to 1]{}
        \W_p^p(\pi_{Z, \sharp} \, \xi, \pi_{Z', \sharp} \, \upsilon).
    \end{equation}
    Since the left-hand side has the same limit as the right-hand side,
    the statement is established.
    
    (iv)
    From the definition of FTLB
    and the compactness of $Z$,
    we deduce
    \begin{equation}{}
        (1 - \alpha) \TLB_p^p( \XX, \YY)
        \le
        \FTLB^p_{\alpha, p}(\XXS,\YYS)
        \le
        (1-\alpha) \TLB_p^p(\XX, \YY)
        + \alpha \, C,
    \end{equation}\normalsize
    where $C$ is an upper bound for the Wasserstein part,
    i.e.,
    $\int_{Z \times Z} \lVert z - z' \rVert^p \d \eta \le C$
    for all 
    $\eta \in\Gamma (\pi_{z,\sharp} \gamma_X, \pi_{z,\sharp} \gamma_Y)$.
    Here, the second inequality holds true,
    due to the gluing lemma \cite[Lem.~7.6]{villani2003topics}.
    More precisely,
    gluing the optimal TLP plan in $\Gamma(\pi_{X, \sharp}\,\xi, \pi_{Y,\sharp}\, \upsilon)$
    with $\xi$ and $\upsilon$,
    we obtain an FTLP plan in $\Pi(\xi, \upsilon)$.
    Taking the limit $\alpha \to 0$ establishes the statement.
     
    (v)
    Positivity and symmetry are clear. 
    For the triangle inequality,
    let $\gamma_i \in \Prob(X \times Y_i)$ be the optimal FTLB plan
    between $\XXS \coloneqq(X \times Z, g, \xi)$
    and $\YYS_i \coloneqq ( Y \times Z_i, h_i, \upsilon_i)$
    with $Z = Z_i$ and $i=1,2$.
    By the gluing lemma \cite[Lem.~7.6]{villani2003topics},
    there exists $\tilde \gamma \in \Prob((X \times Z) \times (Y_1 \times Z_1) \times (Y_2 \times Z_2))$
    with $\pi_{(X \times Z) \times(Y_i \times Z_i) \sharp} \, \tilde\gamma = \gamma_i$.
    Using the triangle inequality of the Wasserstein and Euclidean distances
    as well as the convexity of the power function,
    we obtain
    \begin{align}
        \FTLB^p_{\alpha,p}(\YYS_1, \YYS_2)
        &\le 
        \smashoperator{
        \int_{(X \times Z) \times(Y_1 \times Z_1) \times (Y_2 \times Z_2)}}
        (1 - \alpha) \LD_p^p(y_1, y_2)
        +
        \alpha \, \lVert z_1 - z_2 \rVert^p 
        \,\d \tilde \gamma((x,z), (y_1,z_1), (y_2, z_2))
        \\
        &\le 
        \smashoperator{
        \int_{(X \times Z) \times(Y_1 \times Z_1) \times (Y_2 \times Z_2)}}
        (1 - \alpha) \bigl(\LD_p(x,y_1) + \LD_p(x, y_2) \bigr)^p
        +
        \alpha \bigl(\lVert z - z_1 \rVert + \lVert z - z_2 \rVert \bigr)^p 
        \d \tilde \gamma((x,z), (y_1,z_1), (y_2, z_2))
        \\
        &\le
        2^{p-1}
        \Bigl(
            \smashoperator{
            \int_{(X \times Z) \times(Y_1 \times Z_1)}}
            (1 - \alpha) \, \LD_p^p(x,y_1) 
            +
            \alpha \, \lVert z - z_1 \rVert^p 
            \,\d \gamma_1((x,z), (y_1,z_1))
        \\[-15pt]
        &\hspace{150pt}+ 
            \smashoperator{
            \int_{(X \times Z) \times(Y_2 \times Z_2)}}
            (1 - \alpha) \, \LD_p^p(x,y_2) 
            +
            \alpha \, \lVert z - z_2 \rVert^p 
            \,\d \gamma_2((x,z), (y_2,z_2))
        \Bigr)  
        \\
        &= 
        2^{p-1} \bigl( \FTLB^p_{\alpha,p}(\XXS, \YYS_1) + \FTLB^p_{\alpha,p}(\XXS, \YYS_2) \bigr).
        \tag*{\qedhere}
    \end{align}
 \end{proof}

\begin{proof}[Proof of Prop. \ref{prop:prop_sftlb_properties}] 
    (i) Depending on the employed quadrature rule,
    the mapping $\xi \mapsto \xi^\Q_\alpha$ in \eqref{eq:approx-TLB}
    may not be injective;
    therefore,
    the SW distance in \eqref{eq:sftlb_def} only implies 
    the pseudo-metric properties of SFTLB.
    
    (ii) 
    By construction, 
    we have $\xi^\Q_\alpha \to \xi^\Q_1$ 
    and $\upsilon^\Q_\alpha \to \upsilon^\Q_1$
    weakly in $\Prob(\R^{r+d})$ as $\alpha \to 1$.
    Since SW is continuous with respect to the weak topology 
    \cite[Thm.~1]{nadjahi2020statistical},
    it follows $\SW_2(\xi^\Q_\alpha, \upsilon^\Q_\alpha) \to \SW_2(\xi^\Q_1, \upsilon^\Q_1)$.
    Observing that the support of $\xi^\Q_1$ and $\upsilon^\Q_1$ is contained in $\{0\} \times \R^d$,
    we deduce the assertion
    using Proposition~\ref{prop:dimensional_dependence_sliced}.
    
    (iii)
    By construction, 
    we have $\xi^\Q_\alpha \to \xi^\Q_0$ 
    and $\upsilon^\Q_\alpha \to \upsilon^\Q_0$
    weakly in $\Prob(\R^{r+d})$
    as $\alpha \to 0$.
    Since SW is continuous with respect to the weak topology 
    \cite[Thm.~1]{nadjahi2020statistical},
    it follows $\SW_2(\xi^\Q_\alpha, \upsilon^\Q_\alpha) \to \SW_2(\xi^\Q_0, \upsilon^\Q_0)$.
    Observing that the support of $\xi^\Q_0$ and $\upsilon^\Q_0$ is contained in $\R^r \times \{0\}$,
    we deduce the assertion
    using Proposition~\ref{prop:dimensional_dependence_sliced}.
\end{proof}

 \begin{proof}[Proof of Prop. \ref{prop:dimensional_dependence_sliced}]
    To establish the statement,
    we parametrize the integral of the SW distance
    by the first dimension.
    Following \cite[§1]{Mul98},
    every $\theta \in \Sph^{r+d-1}$ can be written as
    $(t, \sqrt{1 - t^2} \, \theta')$
    with $t \in [-1,1]$ and $\theta' \in \Sph^{r+d-2}$.
    For the integral,
    this means \smash{$\d\theta = (1 - t^2)^{\frac{r+d-3}{2}} \d t \d \theta'$}.
    Moreover,
    the considered measures $\zeta$ and $\zeta'$ are only supported
    on the last $d$ components of $\R^{r+d}$
    \revise{by assumption}.
    \revise{Now we denote}
    the projection to the last $r+d-1$ components
    by \revise{$P_1^\perp(x_1, \dots,x_{r+d}) \coloneqq (x_2, \dots,x_{r+d})$}.
    \revise{Since $\zeta$ and $\zeta'$ are supported on $0 \times \R^{r+d-1}$,
    the projection $P_1^\perp$ is actually a bijection on their supports.
    Therefore,
    the push-forwards
    $\tilde \zeta \coloneqq P^\perp_{1, \sharp} \, \zeta$
    and
    $\tilde \zeta' \coloneqq P^\perp_{1, \sharp} \, \zeta'$
    can be reverted,
    and we have 
    $\zeta = \delta_0 \otimes \tilde \zeta$
    and
    $\zeta' = \delta_0 \otimes \tilde \zeta'$.}
    Incorporating the parametrization 
    and the projection into the slicing,
    we obtain
    \begin{equation}
        \pi_{\theta,\sharp} \, \zeta
        =
        \bigl\langle\bigl(
        \begin{smallmatrix}
            t \\ \sqrt{1 - t^2} \theta'
        \end{smallmatrix}
        \bigr), \cdot \bigr\rangle_\sharp \, \zeta
        =
        \bigl(\sqrt{1 - t^2} \, \cdot \bigr)_\sharp 
        \bigl\langle \theta', \cdot \bigr\rangle_\sharp \, \tilde\zeta
        =
        \bigl(\sqrt{1 - t^2} \, \cdot \bigr)_\sharp
        \pi_{\theta', \sharp} \, \tilde \zeta
    \end{equation}
    Furthermore,
    we exploit that the Wasserstein distance is homogenous,
    i.e.,
    $\W_2^2( (\lambda \cdot)_\sharp \eta, (\lambda \cdot)_\sharp \eta') = \lambda^2 \W_2^2(\eta, \eta')$
    for all $\lambda \in \R$ and for all compactly supported $\eta, \eta'$;
    see \cite[Prop.~7.16.i]{villani2003topics}.
    With these preliminary considerations, 
    we deduce
    \begin{align}
        \SW_2^2(\zeta, \zeta')
        &=
        \frac{1}{\A(\Sph^{r+d-1})}
        \int_{\Sph^{r+d-1}}
        \W_2^2( \pi_{\theta, \sharp} \, \zeta, \pi_{\theta, \sharp} \, \zeta')
        \d\theta
        \\
        &=
        \frac{1}{\A(\Sph^{r+d-1})}
        \int_{\Sph^{r+d-2}}
        \int_{-1}^1
        \W_2^2( \pi_{(t, \sqrt{1 - t^2} \theta'), \sharp} \, \zeta, \pi_{(t, \sqrt{1 - t^2} \theta'), \sharp} \, \zeta')
        \, (1 - t^2)^{\frac{r+d-3}{2}}
        \d t \d \theta'
        \\
        &=
        \frac{1}{\A(\Sph^{r+d-1})}
        \int_{\Sph^{r+d-2}}
        \int_{-1}^1
        \W_2^2( \pi_{\theta', \sharp} \, \tilde\zeta, \pi_{\theta', \sharp} \, \tilde \zeta')
        \, (1 - t^2)^{\frac{r+d-1}{2}}
        \d t \d \theta'
        \\
        &=
        \frac{\A(\Sph^{r+d+1})}{\A(\Sph^{r+d})}
        \, \frac{\A(\Sph^{r+d-2})}{\A(\Sph^{r+d-1})}
        \, \SW_2^2 (\tilde \zeta, \tilde \zeta'),
    \end{align}
    where the integral with respect to $t$
    has been calculated in \cite[(§1.35)]{Mul98}.
    Repeating this argumentation $r$ times, 
    we obtain the assertion.
\end{proof}

\begin{proof}[Proof of Prop.~\ref{prop:metr-equi}]
    (i) The statement directly follows from
    Proposition~\ref{prop:ftlb_prop}
    in combination with
    \eqref{eq:exact_tlb_empirical}
    and \eqref{eq:sw_equivalnce}.
    
    (ii) The metric equivalence follows from 
    the exactness in \eqref{eq:exact_tlb_empirical},
    the corresponding equivalence \eqref{eq:sw_equivalnce}
    of the Wasserstein and SW distance, 
    and the definition of $\SFTLB$
    in \eqref{eq:sftlb_def}.
\end{proof}

\revise{
\begin{proof}[Extension of Prop.~\ref{prop:metr-equi}]
    In order to extent Prop.~\ref{prop:metr-equi}
    to the unbalanced case $n \ne m$,
    we have to choose a quadrature scheme
    such that 
    \eqref{eq:LD} holds precisely 
    instead of approximately,
    and thus
    \eqref{eq:exact_tlb_empirical} remains valid.
    For this,
    notice
    that the push-forwards 
    $g(x,\cdot)_\sharp \xi$ and $h(y,\cdot)_\sharp \upsilon$
    remain empirical measures
    and that
    the corresponding quantile functions
    $q_{g(x,\cdot)_\sharp \xi}$ and $q_{h(y,\cdot)_\sharp \upsilon}$
    for all $x \in X$ and $y \in Y$
    are step functions
    with exactly $n$ and $m$ equal length steps,
    respectively. 
    The difference $q_{g(x,\cdot)_\sharp \xi} - q_{h(y,\cdot)_\sharp \upsilon}$
    is also a step function,
    but with maybe unequal step lengths.
    The jumps in these differences, however, occur at the same positions
    for all $x \in X$ and $y \in Y$.
    Choosing the knots $s_k$ in the middle of the steps
    and the weights $w_k$ as the related step length,
    we obtain an exact quadrature scheme 
    with $r = n+m-1$ 
    for \eqref{eq:LD}.
\end{proof}}

\section{Further Experiments}

\subsection{Additional Runtime Experiment}
We extend the runtime experiment presented in Table~\ref{tab:time_tab}, where we compared our method with the exact OT solver from POT. Again, we employ random Euclidean distance matrices based on randomly initialized point clouds. Here, we present additional runtime experiments of SFTLB compared to FTLB with entropic regularization \cite{PeyreCuturi2019}. Since we observe slow POT runtimes for entropically regularized algorithms, we turn to another package. In particular, we employ the `\textit{geomloss}' package (Feydy et al. 2019) for entropic OT with strong  ($0.1$) and weak entropic regularization ($0.001$). Since the `\textit{geomloss}' package is incapable of handling $10 \,000 \times 10 \, 000$ matrices, we repeat the previous runtime experiment with graph sizes 50, 500, and 5\,000. Additionally, we present the SFTLB runtime with different configurations of the quadrature size $r$ and the projection number $L$. Again, we run our experiments on the CPU. Looking at the table, we again see a runtime advantage of SFTLB compared to FTLB. Additionally, we observe that $L$ has a strong impact on the runtime, whereas the impact of $r$ is \revise{more} subtle.
\begin{table}[ht]
    \centering
    \footnotesize
    \begin{tabular}{lccc}
    \toprule
    $n$ & 50 & 500  & 5\,000 \\
    \midrule
    Entropic FTLB (Regularization: $0.1$) & 0.00 $\pm$ 0.00 & 0.05 $\pm$ 0.01 & 2.95 $\pm$ 0.07\\
    Entropic FTLB (Regularization: $0.001$) & 0.01 $\pm$ 0.01 & 0.06 $\pm$ 0.01 & 4.14 $\pm$ 0.17\\
    SFTLB ($r=10$ \& $L=50$) & 0.00 $\pm$ 0.00 & 0.04 $\pm$ 0.01 & 0.30 $\pm$ 0.00 \\
    SFTLB ($r=10$ \& $L=1000$) & 0.00 $\pm$ 0.00 & 0.09 $\pm$ 0.01 & 0.70 $\pm$ 0.01 \\
    SFTLB ($r=100$ \& $L=50$) & 0.00 $\pm$ 0.00 & 0.05 $\pm$ 0.02 & 0.31$\pm$ 0.02 \\
    SFTLB ($r=100$ \& $L=1000$) & 0.01 $\pm$ 0.00 & 0.07 $\pm$ 0.01 & 0.74 $\pm$ 0.01 \\
    \bottomrule
    \end{tabular}
    \caption{Average computation time in seconds for 5 instances with graph sizes 50, 500, and 5\,000.}
    \label{tab:time_tab_ablation}
\end{table}
\subsection{Illustration of Hyperparameter Impact}
Next, we illustrate the impact of $L$ and $r$ visually. To illustrate the convergence of our Monte Carlo estimate, we plot
\begin{equation*}
    \frac{\frac1L \sum_{\ell=1}^L
     \W^2_2(\pi_{\theta_\ell,\sharp} \, \xi_\alpha^\Q, \pi_{\theta_\ell,\sharp} \, \upsilon^\Q_\alpha)}
     {\frac{1}{20 \, 000} \sum_{\ell=1}^{20 \, 000} 
     \W^2_2(\pi_{\theta_\ell,\sharp} \, \xi_\alpha^\Q, \pi_{\theta_\ell,\sharp} \, \upsilon^\Q_\alpha)}
\end{equation*}
for different choices of $L$, 10 different random Euclidean $50 \times 50$ distance matrices, 
and $r=10$ in Figure~\ref{fig:L_impact}. Each colored line corresponds to an input distance matrix pair. We see that after around 100 random projections, the estimate deviates by less than 10\% from the high-fidelity Monte Carlo estimate obtained with 20\,000 projections.
\\
Next, we use the same random matrices to display the impact of the quadrature grid $r$ on SFTLB. Here, it is important to notice that the choice of $r$ impacts the ground truth value of SFTLB—notably, a higher choice of $r$ leads to smaller values. This is illustrated for 10 random matrix pairs in Figure~\ref{fig:r_impact}, where we employ $L=1000$. Again, each colored line corresponds to a different quantile discretization for a given input distance matrix pair.
\\
\\
\revise{
\paragraph{Interplay of $r$, $L$ and the dimension:}
In this paragraph, we aim to provide additional insights into the dependence of $r$ and $L$ beyond the ablation study. For simplicity, we focus on the classical GW distance without any fused features.

From the theoretical point of view, it is important to note that the quadrature parameter $r$ corresponds 
to the dimension of $\xi^{\mathcal{Q}}, \nu^{\mathcal{Q}} \in \Prob(\R^r)$. 
Generally, slicing techniques are not always well-suited to high-dimensional settings since the required number of random projections can explode. In particular, the Monte Carlo estimate for the sliced Wasserstein distance has a convergence rate of the form
\begin{equation*}
\sqrt{\frac{1}{L}}\, C_{\text{MC}}(\xi^{\mathcal{Q}}, \nu^{\mathcal{Q}})
\quad\text{with}\quad
C_{\text{MC}}(\xi^{\mathcal{Q}}, \nu^{\mathcal{Q}})
\coloneqq
\operatorname{Var}_{\theta}^{1/2}\!\left[
\W_2^2\!\bigl(\pi_{\theta,\sharp}\,\xi^{\mathcal{Q}},\, \pi_{\theta,\sharp}\,\upsilon^{\mathcal{Q}}_{\alpha}\bigr)
\right],
\end{equation*}
where $\operatorname{Var}_\theta$ 
denotes the variance with respect to uniform spherical projections $\theta \sim \mathcal{U}(\Sph^{r-1})$, see \cite{nadjahi2020statistical}. 
Given arbitary measures on $\Prob(\R^r)$, we would expect the constant 
$C_{\text{MC}}(\xi^{\mathcal{Q}}, \nu^{\mathcal{Q}})$ 
to balloon for high dimensions $r$.
Practically, we can circumvent this issue by choosing a small $r \ll n, m$ as in our experiments, e.g., $r=10$ or $r=50$. 

However, even beyond this numerical workaround, we observe good classification results for $L \ll r$ in Table~\ref{tab:knn_shapes_ablation}. We hypothesize that this the effect occurs because $\xi^\Q, \nu^Q$ tend to concentrate on a `smaller' space, especially for large $r$. 
Particularly, $\xi^\Q, \nu^\Q$ are only supported on a cone $S_r = \{0 \leq x_1 \leq \ldots \leq x_r\}$.
Given fixed mm-spaces $\XX, \YY$, the constant $C_{\text{MC}}(\xi^{\mathcal{Q}}, \nu^{\mathcal{Q}})$ does hence not necessarily become larger for larger $r$.
}
\begin{figure}[th]
    \centering
    \begin{minipage}[t]{0.48\linewidth}
        \centering
        \includegraphics[width=\linewidth]{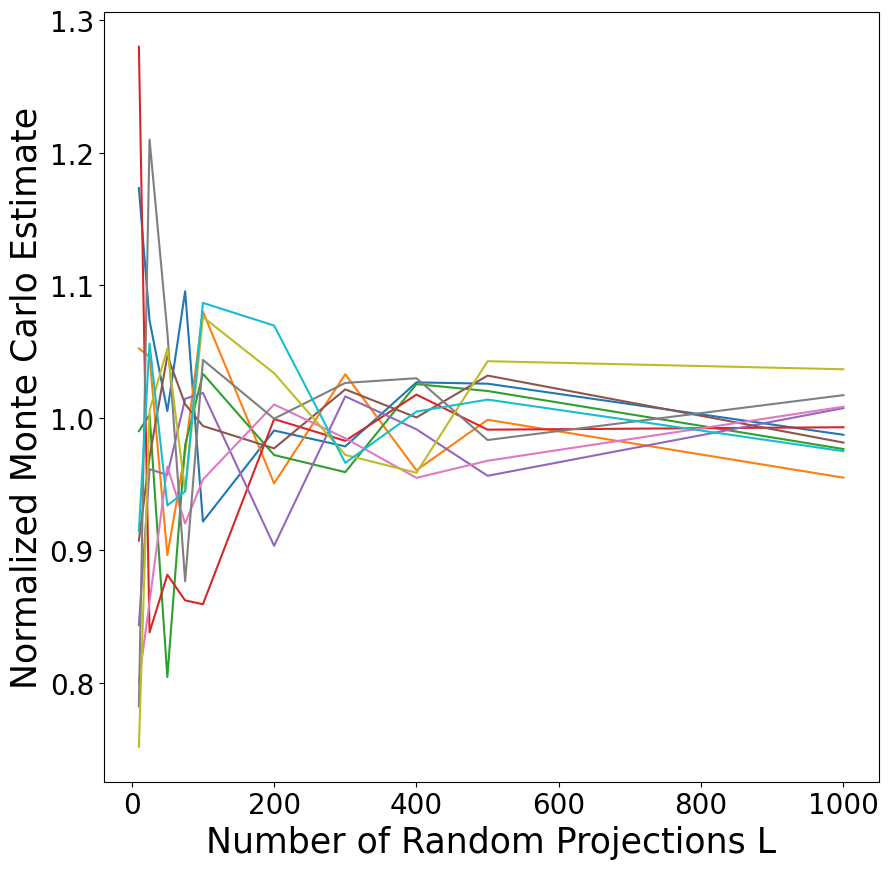}
        \caption{Impact of projection number $L$ on deviation of Monte Carlo SFTLB Estimate. \revise{Each line corresponds to multiple SFTLB estimates for a fixed simulated measure pair.}}
        \label{fig:L_impact}
    \end{minipage}
    \hfill
    \begin{minipage}[t]{0.48\linewidth}
        \centering
        \includegraphics[width=\linewidth]{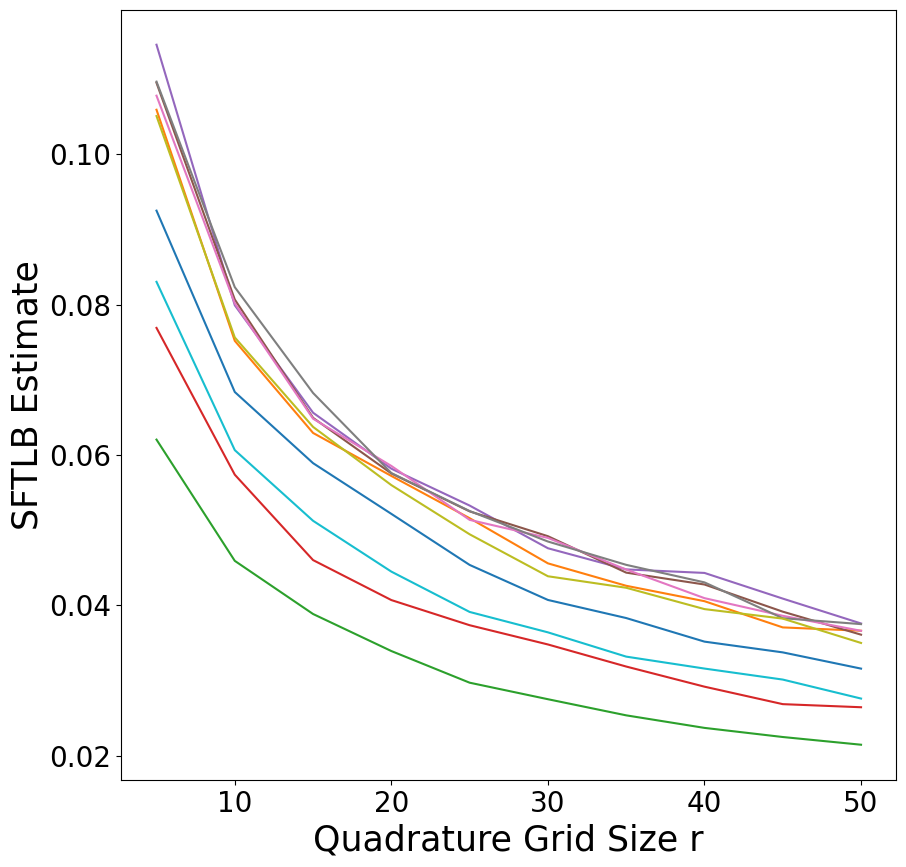}
        \caption{Large quadrature sizes $r$ lead to smaller SFTLB values for the same inputs.
        \revise{Each line corresponds to multiple SFTLB estimates for a fixed simulated measure pair.}}
        \label{fig:r_impact}
    \end{minipage}
\end{figure}
\subsection{Hyperparameter Ablation Study}

\sisetup{
  round-mode      = places,
  round-precision = 1
}
We repeat the '2D Shapes' and the 'FAUST-500' experiments presented in Table~\ref{tab:knn_shapes} for varying numbers of quadrature points $r=5, 25, 100$ and random projections $L=10, 100, 500$ and present the results in Table~\ref{tab:knn_shapes_ablation}. Higher choices of $r$ and $L$ increase accuracy slightly, but also decrease runtime. 
\revise{As for the hyperparameter $\alpha$ controlling the feature importance, we remark that its influence and optimal value depend on the application. It can be chosen based on domain knowledge or cross-validation strategies, see \cite{vayer2020fused}.}
\begin{table*}[ht]
  \centering
  \footnotesize                      
  \setlength{\tabcolsep}{8pt}  
  \renewcommand{\arraystretch}{1.1}
  \begin{tabular}{l
                  c c
                  c c}
    \toprule
    STLB Parameters
      & \multicolumn{2}{c}{2D shapes}
      & \multicolumn{2}{c}{FAUST-500} \\
    \cmidrule(lr){2-3} \cmidrule(lr){4-5}
      & acc. (\%) & time (ms)
      & acc. (\%) & time (ms) \\
    \midrule
    $r=5$ \& $L=10$     
      & 98.3$\pm$2.1  & 0.8
      & 33.9$\pm$5.1  & 4.7 \\
    $r=5$ \& $L=100$           
      & 97.9$\pm$2.2 & 1.2
      & 34.4$\pm$5.4  & 9.4 \\
    $r=5$ \& $L=500$           
      & 98.1$\pm$2.1 & 2.9
      & 34.4$\pm$5.4  & 17.1 \\
    $r=25$ \&  $L=10$  
      & 98.1$\pm$1.4  & 0.6
      & 35.3$\pm$5.4  & 4.6 \\
    $r=25$ \&  $L=100$         
      & 99.5$\pm$1.1  & 1.5
      & 35.1$\pm$5.4  & 9.9 \\
    $r=25$ \&  $L=500$         
      & 99.7$\pm$0.8  & 4.0
      & 35.4$\pm$5.5  & 17.6 \\
    $r=100$ \&  $L=10$  
      & 98.8$\pm$1.5  & 1.2
      & 33.2$\pm$5.0  & 5.1 \\
    $r=100$ \&  $L=100$         
      & 99.7$\pm$0.9  & 2.0
      & 35.6$\pm$5.4  & 9.9 \\
    $r=100$ \&  $L=500$         
      & 99.6$\pm$0.9  & 5.1
      & 35.1$\pm$5.4  & 17.7 \\
    \bottomrule
  \end{tabular}
  \caption{Ablation of KNN Shape classification: 
    Mean accuracy (acc.) with standard deviation
    and mean runtime (time).}
  \label{tab:knn_shapes_ablation}
\end{table*}

\revise{
\subsection{Implementation Details for LGW and qGW}
We employ the POT implementation for qGW \cite{chowdhury2021quantized},
where we set the parameters $n_1~=~n_2~=~25$. Moreover, we employ the official GitHub implementation (\url{https://github.com/Gorgotha/LGW/tree/master}) for LGW \cite{beier2022linear}. Here, we calculate the reference space via a GW barycenter of size 50 regarding all inputs for each experiment. Note that the runtime of this linearized method behaves differently from other methods since distance calculations become almost instantaneous after the computation of the reference space and the subsequent alignment of the input data. However, this preprocessing is very time-consuming and most of the time is spent on the barycenter calculation. Hence, we present the overall run time divided by the number of distances in Table~\ref{tab:knn_shapes}. 

We tried to choose suitable hyperparameters for both methods. 
Note that this choice involves a trade-off between quality and run time. As a result, the presented methods can achieve better classification results at the cost of a longer runtime. In particular, the reference space of our LGW implementation relies on a barycenter computation based on all inputs, whereas LGW allows accelerated computations by using a subsampled barycenter (at the cost of potential performance losses).

Lastly, we note that qGW offers the advantage of GW transport plans, and the LGW framework has the unique benefit of enabling highly efficient distance calculations for new and unseen data.

\subsection{Additional Large-Scale Experiment}
Adding to the experiments in Table~\ref{tab:knn_shapes}, we conduct an additional shape classification experiment on a larger dataset.
In particular, we focus on the test split of the ModelNet-10 dataset \cite{uy2019revisiting_modelnet}. This dataset split consists of 908 shapes from 10 different classes. We randomly downsample each shape to an Euclidean point cloud of size 50. 

Subsequently, we employ the same classification pipeline as in the other experiments presented in Table~\ref{tab:knn_shapes}. As a result, we get the following classification accuracies:
26.4\% $\pm$ 1.6\% for SLB, 27.9\% $\pm$ 1.7\% for TLB, 27.3\% $\pm$ 1.7\% for AE, 26.8\% $\pm$ 1.7\% for STLB and 27.3\% $\pm$ 1.7\% for GW. Again, STLB performs on par with other approaches.
}
\end{document}